\DeclareFontShape{T1}{lmr}{m}{sc}{<->ssub*cmr/m/sc}{}
\DeclareFontShape{T1}{lmr}{b}{sc}{<->ssub*cmr/b/sc}{}
\DeclareFontShape{T1}{lmr}{bx}{sc}{<->ssub*cmr/bx/sc}{}
\newcommand{\thmheadercommand}[1]{\textbf{\scshape{}#1.\\*}}
\newtheoremstyle{yannthm}{\topsep}{\topsep}{\slshape}{}{\scshape\bfseries}{.}{.5em}{%
\thmname{#1}\thmnumber{ #2}\thmnote{#3}%
}
\newtheoremstyle{yannthm2}{\topsep}{\topsep}{}{}{\scshape\bfseries}{.}{.5em}{%
\thmname{#1}\thmnumber{ #2}\thmnote{#3}%
}
\def\d{\operatorname{d}\!{}}
\def\R{{\mathbb{R}}}
\renewcommand{\geq}{\geqslant}
\renewcommand{\leq}{\leqslant}
\newcommand{\deq}{\mathrel{\mathop{:}}=}
\newcommand{\from}{\colon} 
\def\eps{\varepsilon}
\renewcommand{\epsilon}{\varepsilon}
\renewcommand{\phi}{\varphi}
\DeclareMathOperator{\Var}{Var}
\DeclareMathOperator{\Cov}{Cov}
\let\oldPr\Pr
\renewcommand{\Pr}{\oldPr\nolimits}
\newcommand{\E}{\mathbb{E}}
\newcommand{\KL}[2]{\mathrm{KL}\!\left(#1 \,|\hspace{-.15ex}|\,#2\right)}
\DeclareMathOperator{\Tr}{Tr}
\DeclareMathOperator{\Id}{Id}
\DeclareMathOperator{\diag}{diag}
\newcommand{\norm}[1]{\left\lVert#1\right\rVert}
\newcommand{\1}{\mathbbm{1}}
\newenvironment{dem}[1][]{\begin{proof}[\thmheadercommand{Proof#1}]~\newline\ignorespaces}{\end{proof}}
\theoremstyle{yannthm}
\newtheorem{defi}{Definition}
\newtheorem*{defi*}{Definition}
\newtheorem{prop}[defi]{Proposition}
\newtheorem*{prop*}{Proposition}
\newtheorem{thm}[defi]{Theorem}
\newtheorem*{thm*}{Theorem}
\newtheorem{lem}[defi]{Lemma}
\newtheorem*{lem*}{Lemma}
\newtheorem{cor}[defi]{Corollary}
\newtheorem*{cor*}{Corollary}
\newtheorem*{ex*}{Example}
\newtheorem*{subenonce}{}
\theoremstyle{yannthm2}
\newtheorem*{exo*}{Exercise}
\newtheorem*{rem*}{Remark}
\newtheorem*{subenonce2}{}
\newcommand{\transp}[1]{#1^{\!\top}\!}
\newcommand{\Bol}{\mathrm{Bolt}_{\pi_0}}
\newcommand{\piref}{{\pi_0}}
\newcommand{\AK}{\mathcal{A}}
\newcommand{\drho}{\grave \rho}
\title{Which Features are Best for Successor Features?}
\date{}
\author{Yann Ollivier}
\newcommand{\TODO}[1]{{\color{red} TODO: {#1}}}
\newcommand{\todo}[1]{\TODO{#1}}
\newcommand{\option}[1]{{\color[rgb]{.4,0,.8}[Optional:#1]}} 
\renewcommand{\TODO}[1]{}
\renewcommand{\todo}[1]{}
\renewcommand{\option}[1]{}  
\begin{document}

\maketitle

\begin{abstract}
In reinforcement learning, universal successor features (SFs) are a way to provide zero-shot adaptation to new tasks at test time: they provide optimal policies for all downstream reward functions lying in the linear span of a set of base features. But it is unclear what constitutes a good set of base features, that could be useful for a wide set of downstream tasks beyond their linear span. Laplacian eigenfunctions (the eigenfunctions of $\Delta+\Delta^\ast$ with $\Delta$ the Laplacian operator of some reference policy and $\Delta^\ast$ that of the time-reversed dynamics) have been argued to play a role, and offer good empirical performance.

Here, for the first time, we identify the optimal base features based on an objective criterion of downstream performance, in a non-tautological way without assuming the downstream tasks are linear in the features. We do this for three generic classes of downstream tasks: reaching a random goal state, dense random Gaussian rewards, and random ``scattered'' sparse rewards.  The features yielding optimal expected downstream performance turn out to be the \emph{same} for these three task families.  They do not coincide with Laplacian eigenfunctions in general, though they can be expressed from $\Delta$: in the simplest case (deterministic environment and decay factor $\gamma$ close to $1$), they are the eigenfunctions of $\Delta^{-1}+(\Delta^{-1})^\ast$.
%

We obtain these results under an assumption of large behavior cloning regularization with respect to a reference policy, a setting often used for offline RL. Along the way, we get new insights into KL-regularized\option{natural} policy gradient, and into the lack of SF information in the norm of Bellman gaps.
\end{abstract}

\section{Introduction and Related Work}

The successor features (SFs) framework holds the promise of solving any
new reinforcement learning task in a fixed environment in an almost
zero-shot manner, without extensive learning or planning for each new
task (see e.g., \cite{borsa2018universal,zeroshot,allpolicies} and the
references therein). At train time, the agent observes reward-free
transitions in an environment, and learns some features and a parametric
family of policies. At test time, the agent is faced with a new task
specified via a reward function $r$. The function $r$ is linearly
projected onto the set of features, and then a suitable pre-trained
policy is applied. At test time, the linear projection requires no
learning or planning, and only uses a relatively small number of reward
values (or knowledge of the reward function itself, e.g., for
goal-reaching).

A good choice of features is crucial for successor features, and several
approaches to select relevant features for SFs have been proposed.  Given
a finite number of features $\phi$, SFs can produce policies within a
family of tasks directly related to $\phi$
\cite{barreto2017successor,borsa2018universal,zhang2017deep,grimm2019disentangled},
but this often uses hand-crafted $\phi$ or features $\phi$ that linearize
some known training rewards.  VISR and APS
\cite{hansen2019fast,liu2021aps} build $\phi$ automatically via diversity
criteria \cite{eysenbach2018diversity,gregor2016variational}.
Forward-backward representations \cite{allpolicies,zeroshot} use
successor measures \cite{successorstates} to learn features $\phi$ that
model the main variations of the distribution of visited states depending
on the starting point and policy.  In related contexts, further spectral
variants have been proposed to provide a basis of features to express
reward functions and $Q$-functions: the singular value decomposition of
the inverse Laplacian, the singular value decomposition of the transition
matrix $P_{\pi_0}$ induced by an exploration policy $\pi_0$, the
eigenfunctions of $P_{\pi_0}+ P^\ast_{\pi_0}$, and more (see
\cite{ghosh2020representations, ren2022spectral} and the references
therein).

\cite{zeroshot} compared SFs with features built from a number of
approaches: Laplacian
eigenfunctions \cite{wu2018laplacian,mahadevan2007proto},
forward-backward representations, APS,
auto-encoders, inverse
dynamics models, spectral decompositions of the transition matrix, and
more.
Forward-backward representations and Laplacian eigenfunctions were found
to perform best on average.

\bigskip

Yet all these choices of features are based on somewhat heuristic
arguments and discussions. Here, for the first time, we fully
characterize the best features for SFs mathematically, by directly
optimizing expected downstream performance.

We define three agnostic models of downstream tasks: random Gaussian reward
functions; reaching a random goal state; and ``scattered'' random rewards
(a random number of sparse rewards placed at randomly located states, with random
signs and magnitudes). We then ask which features provide the best
expected performance on these tasks.

Surprisingly, the conclusions are identical for all three classes: \emph{even
though these three models cover very different tasks} (dense rewards,
single-state rewards, multiple sparse rewards), \emph{the best features are the
same}. Thus, the conclusions are robust to the precise choice of a
downstream task model. The existence of clear optimal features for these
tasks is itself a nontrivial result, given the prior-free nature of these
reward models.

\paragraph{Overview of results.} All along, following previous work for
offline reinforcement learning (see e.g.\ the survey
\cite{levine2020offline}), we work with entropy-regularized policies that
stay relatively close to a reference policy $\pi_0$.  We estimate the
\emph{regularized return} $G^{\pi}_r$ of a policy $\pi$ for a reward $r$,
which includes a Kullback--Leibler penalty for deviating from a $\pi_0$
(Definition~\ref{def:regul}).  We assume that the regularization constant
(temperature)
$T$ is relatively large, and derive optimal features up to an error
$O(1/T^2)$.

Namely, we look for the features $\phi$ such that, when using the policy $\hat \pi$ estimated by successor features (Definition~\ref{def:rsf}), the expected regularized return
\begin{equation}
\E_r [G^{\hat \pi}_r]
\end{equation} of $\hat \pi$ is maximal. The expectation is over rewards $r$ in three simple models of
downstream tasks (Section~\ref{sec:rmodels}). Our main findings are as follows. All results are up to
an error $O(1/T^2)$ on optimality.

\begin{itemize}
\item In deterministic environments, the optimal features for regularized
successor features are the largest eigenfunctions of
\begin{equation}
\Delta^{-1}+(\Delta^{-1})^{\ast}-(1-\gamma^2)
(\Delta^{-1})^\ast \Delta^{-1}
\end{equation}
(Theorem~\ref{thm:gen}),
where $\Delta\deq \Id-\gamma P_{\pi_0}$ is the Laplacian operator of the
reference policy $\pi_0$,
and $(\Delta^{-1})^\ast$
is the adjoint of $\Delta^{-1}$ acting on
$L^2(\rho)$. \footnote{This adjoint is
$\diag(\rho)^{-1}\transp{(\Delta^{-1})}\diag(\rho)$, not
$\transp{(\Delta^{-1})}$.} Here $\rho$ is the stationary distribution of
state-actions under
the reference policy.

\item
For $\gamma\to 1$ this reduces to the largest eigenfunctions of
$\Delta^{-1}+(\Delta^{-1})^{\ast}$ (Theorem~\ref{thm:lap}), corresponding
to long-range (low frequency) information on the reward function.

\item For $\gamma\to 0$ this reduces to the smallest eigenfunctions of
$P_{\pi_0}^\ast P_{\pi_0}$ (Proposition~\ref{prop:gamma0}), corresponding
to short-range (high-frequency) information on the reward function.

\item For general, non-deterministic environments, 
we show that the optimal features are the reward
functions $r$ whose \emph{advantage functions} have maximal norm for a
given norm of $r$ (Theorem~\ref{thm:featurereturn} and
Corollary~\ref{cor:optfeatures}). 
\option{More precisely, the optimal features are the largest
eigenfunctions of the self-adjoint operator
$\diag(\rho)^{-1}\AK_{\pi_0}$, where $\AK_{\pi_0}$ is the unique
symmetric positive semidefinite matrix such that for any reward $r$, the
norm of its advantage function $A^{\pi_0}_r$ is
$\norm{A^{\pi_0}_r}^2_{L^2(\rho)}=\transp{r} \AK_{\pi_0} r$
(Definition~\ref{def:AK}).}

Explicitly, the optimal features are the
largest eigenfunctions of the matrix
\begin{equation}
\diag(\rho)^{-1} \transp{(\Delta^{-1})}\left(
\diag(\rho)-\transp{\pi_0}\diag(\rho_S)\pi_0\right)\Delta^{-1}.
\end{equation}
where again, $\rho$ is the stationary distribution of state-actions under
$\pi_0$, and $\rho_S$ is its marginal on states.
In deterministic environments, this expression simplifies to 
the results above.

\item We show that, in successor features, the average norm of Bellman
gaps for downstream tasks is uninformative as to
which features perform best: it only depends on the number of linearly
independent features
(Proposition~\ref{prop:gap}).

\item As a key intermediate result, we show that for KL-penalized policy
improvement, the optimality gap due to imperfect $Q$-function estimation
is equal to the norm of the error on the advantage function
(Theorem~\ref{thm:return}).

\option{
\item Finally, the advantage kernel gives the performance improvement along natural policy gradient
trajectories (Theorem \TODO{}): if $(\pi_t)_{t\geq 0}$ is the family of
policies learned by natural policy gradient on a reward $r$, then
\begin{equation}
\frac{\d}{\d t} \E_{\rho_t} [r]=\transp{r}\AK_{\pi_t} r
\end{equation}
where $\rho_t$ is the invariant distribution of $\pi_t$.}

\end{itemize}

\paragraph{Comparison with Laplacian eigenfunctions and forward-backward
representations.}
These results complement the empirical results in
\cite{zeroshot}, where forward-backward representations and Laplacian
eigenfunctions performed best among the SF methods tested.

The Laplacian operator of a policy $\pi_0$
is $\Delta=\Id -\gamma P_{\pi_0}$, and Laplacian eigenfunctions are
the smallest eigenfunctions of
$\Delta+\Delta^\ast$. 
Forward-backward representations learn a
finite-rank approximation of $\Delta^{-1}$ as
$\transp{F}B$ and then use $B$ as features. Here we find the optimal
features to be the eigenfunctions of $\Delta^{-1}+(\Delta^{-1})^{\ast}$
(in the simplest case of a deterministic environment and $\gamma\to 1$).
In general $\Delta^{-1}+(\Delta^{-1})^{\ast}\neq (\Delta+\Delta^\ast)^{-1}$ except
in very specific environments (see discussion after
Theorem~\ref{thm:lap}). So in general, the optimal features derived here
differ from Laplacian eigenfunctions. They also differ from $B$ in
the forward-backward representation: they would be closer to extracting the
symmetric part of the finite-rank approximation $\transp{F}B$.

Still, our results may explain the good empirical performance of
forward-backward representations and Laplacian eigenfunctions for SFs:
they are the methods that come closest to the theoretically optimal
features among the methods tested in \cite{zeroshot}.

\paragraph{Limitations and perspectives.} A first limitation of these
results comes from the entropy regularization: all the statements about
optimality hold up to an error $O(1/T^2)$. It is not clear how far this
approximation extends in practice. Still, a regularized setup is natural
for zero-shot reinforcement learning based on a fixed
foundation model such as successor feature: the policies deployed at test
time only depend on information from the trainset used to build the
model, so it makes sense not to deviate too much from the states and
behaviors explored in the trainset \cite{levine2020offline}.

Second, in this work, we provide no algorithms to actually learn the
optimal features (either the eigenfunctions of
$\Delta^{-1}+(\Delta^{-1})^\ast$, or the reward functions $r$ that
maximize the norm of the advantage function). This is left to future
work.

Finally, more fundamental limitations come from the setup of successor
features itself. SFs rely on the linear projection of the reward onto a
subspace of features. This method can only be exact in a linear subspace
of reward functions, and the task encoding is linear. Recent works such
as \TODO{} attempt to remove this limitation by defining
\emph{auto-regressive features} that let finer task encoding features
depend on previously computed, coarse task encoding features, resulting
in a fully nonlinear task encoding. SFs are also limited to tackling new
tasks within a given environment and dynamics, not new environment or
different dynamics in the same environment; some workarounds have been
proposed to handle similar enough environments
\cite{zhang2017deep,abdolshah2021new}.

\TODO{is there work on kernelized SFs?}

\paragraph{Acknowledgments.} The author would like to thank Andrea
Tirinzoni and Ahmed Touati for comments on the text, suggestions for
references, and
helpful discussions around this topic.

\section{Preliminaries: Notation, Universal Successor Features,
Regularized Policies}

\subsection{Setup and Notation}

\paragraph{Markov decision processes, matrix notation.}
Let $\mathcal{M}=(S,A,P,\gamma)$ be
a reward-free Markov decision process (MDP)
with state space $S$, action space $A$, transition probabilities
$P(s'|s,a)$ from state $s$ to $s'$ given action $a$,
and discount factor $0 < \gamma < 1$ \cite{sutton2018reinforcement}. We
assume that 
$S$ and $A$ are finite. \footnote{In principle, most of the ideas in this text
extend to continuous spaces, but this would make the statements
unnecessarily technical.}
A policy $\pi$ is a function $\pi\from S\to \mathrm{Prob}(A)$ mapping a
state $s$ to the probabilities of actions in $A$.
Given
$(s_0,a_0)\in S\times A$ and a policy $\pi$,
we denote $\E[\cdot|s_0,a_0,\pi]$ the
expectations under state-action sequences $(s_t,a_t)_{t
\geq 0}$ starting at $(s_0,a_0)$ and following policy $\pi$ in the
environment, defined by sampling $s_t\sim P( s_t|s_{t-1},a_{t-1})$ and
$a_t\sim \pi( a_t | s_t)$. We define $P_\pi(s', a' | s, a)
\deq
P( s'| s, a) \pi( a' | s')$, 
the state-action transition probabilities 
induced by $\pi$.  Given a reward
function $r\from S\times A \to \R$, the $Q$-function of $\pi$ for $r$ is
$Q_r^\pi(s_0,a_0)\deq \sum_{t\geq 0} \gamma^t \E
[r(s_t,a_t)|s_0,a_0,\pi]$. The \emph{value function} of $\pi$ for $r$ is
$V_r^\pi(s)\deq \E_{a\sim \pi(s)} Q_r^\pi(s,a)$, and the
\emph{advantage function} is
is $A_r^\pi(s,a)\deq Q_r^\pi(s,a)-V_r^\pi(s)$.

For the statements and proofs, it is convenient to view these objects as
vectors and matrices. We treat rewards and $Q$-functions as column
vectors of size $\#S\times \#A$. We view $P$ as
a matrix of size $(\#S\times \#A)\times \#S$ with entries
$P_{(sa)s'}=P(s'|s,a)$. A policy $\pi$ is seen as a matrix of size
$\#S\times (\#S\times \#A)$ with entries
$\pi_{s(s'a)}=\pi(a|s)\1_{s=s'}$. 
We denote $P_{\pi}\deq P\pi$.
With these conventions, the
$Q$-function satisfies the Bellman equation
\begin{equation}
Q^\pi_r=r+\gamma P_\pi Q^{\pi}_r.
\end{equation}

\paragraph{Reference policy $\pi_0$, invariant distribution $\rho$.} Let $\pi_0$ be some reference policy.
We assume
that $\pi_0$ is ergodic. Let $\rho$ be the asymptotic distribution of
state-actions induced by $\pi_0$ (the stationary distribution of
$P_{\pi_0}$). Typically, $\pi_0$ is an exploration policy used to build
some training set for RL algorithms; in this situation, the distribution
of states and actions in the training set is approximately $\rho$.
Let $\rho_S$ be the marginal distribution of states under $\rho$.
We abbreviate
\begin{equation}
\drho \deq \diag(\rho),\qquad \drho_S\deq \diag(\rho_S)
\end{equation}
the diagonal matrices of size $\#S\times \#A$ with diagonal entries equal
to $\drho_{(sa)(sa)}=\rho(s,a)$, and of size $\#S$ with diagonal entries
equal to $\rho_S$, respectively.

\paragraph{$L^2(\rho)$ norm, advantage norm.} Two
for functions $f$ on
$S\times A$ will play an important role: the $L^2(\rho)$ norm, defined as
\begin{equation}
\norm{f}^2_{L^2(\rho)}\deq\E_{(s,a)\sim \rho} [f(s,a)^2]=\transp{f}\drho f
\end{equation}
and the \emph{advantage norm}, defined as
\begin{equation}
\norm{f}^2_A\deq \E_{(s,a)\sim \rho}\left[\left(
f(s,a)-\E_{a'\sim \pi_0(s)} [f(s,a')]
\right)^2\right].
\end{equation}
By construction,
\begin{equation}
\norm{Q^{\pi_0}_r}_A=\norm{A^{\pi_0}_r}_{L^2(\rho)}
\end{equation}
though this does not hold for other policies $\pi\neq \pi_0$, because
the expectation inside $\norm{f}^2_A$ is with respect to $\pi_0$.

We denote $\langle\cdot,\cdot\rangle_{L^2(\rho)}$ and
$\langle\cdot,\cdot\rangle_A$ the inner products associated
with these two norms.

The \emph{adjoint} of a linear operator $M$ on $L^2(\rho)$ is the unique
operator $M^\ast$ such that $\langle x,My\rangle_{L^2(\rho)}=\langle
M^\ast x,y\rangle_{L^2(\rho)}$ for any $x,y\in L^2(\rho)$. It is given by
the matrix $M^\ast=\drho^{-1} \transp{M}\drho$.

Working in $L^2(\rho)$ rather than the Euclidean metric on rewards and
$Q$-functions is mathematically the most natural way to have results that
still make sense for general state spaces beyond the finite case. It also
links all metrics to the data: since the distribution of samples in the
training set is approximately $\rho$, the norm
$\norm{f}^2_{L^2(\rho)}=\E_{(s,a)\sim \rho} [f(s,a)^2]$ can be estimated
empirically. In contrast, the Euclidean norm $\norm{f}^2$ would be
estimated by sampling random states uniformly distributed in the full
space (or in a given domain of $\R^n$ for continuous states): such states
might be irrelevant or even non-realistic. (See also the discussion of
reward models in Section~\ref{sec:rmodels}.)

\paragraph{Laplacian operator $\Delta$.}
For $\gamma\leq 1$, we define the \emph{Laplacian operator} of $\pi_0$ as
\begin{equation}
\Delta\deq \Id-\gamma P_{\pi_0}.
\end{equation}
The Bellman equation rewrites $\Delta Q^{\pi_0}_r=r$.

For $\gamma<1$ we have $\Delta^{-1}=\sum_{t\geq 0}\gamma^t P_{\pi_0}^t$.
For $\gamma=1$ (the most standard definition of the Laplacian), $\Delta=\Id-P_{\pi_0}$ is not invertible, as $1$ is an eigenvalue of
$P_{\pi_0}$ associated with the constant eigenvector $\1$. However, since
$\pi_0$ is ergodic, the multiplicity of this eigenvalue is $1$: by the
Perron--Frobenius theorem, $\Delta$ is invertible on the orthogonal of
the constant functions.

We denote by $L^2_0(\rho)$ the subset of functions $f$ on $S\times A$
whose average under $\rho$ is $0$. Thus, for $\gamma=1$, $\Delta$ is
invertible on $L^2_0(\rho)$.

\subsection{Regularized Policies, Regularized Return}

We are only going to consider policies
that do not deviate too much from the reference policy $\pi_0$. This is
often considered in the offline reinforcement learning setup or to
enforce safety \cite{levine2020offline}.

This can be done by adding a Kullback--Leibler
regularization term to the reward function, as described, for instance,
in \cite[\S 4.3]{levine2020offline} (policy penalty methods), which we
follow here.

\begin{defi}[ (Regularized return)]
\label{def:regul}
Let $T\geq 0$ be 
a temperature parameter. We define the \emph{regularized reward function}
for a policy $\pi$ as
\begin{equation}
\label{eq:regulr}
\bar r(s,a)\deq r(s,a) - T\, \KL{\pi(s)}{\pi_0(s)}
\end{equation}
where $\KL{\pi(s)}{\pi_0(s)}$ is the Kullback--Leibler divergence between
the policies $\pi_0$ and $\pi$ at $s$.

We define the \emph{regularized return} of policy $\pi$ for reward $r$ as
its expected return for the regularized reward, namely,
\begin{equation}
\label{eq:G}
G^\pi_r\deq \E_{s_0\sim \rho} \left[\sum_{t\geq 0} \gamma^t \bar r(s_t,a_t)
\mid s_0,\pi
\right]
\end{equation}
and we say that a policy is \emph{regularized-optimal} for $r$ if it maximizes
$G^{\pi}_r$.
\end{defi}

Maximizing $G^\pi_r$ is equivalent to maximizing the expected
return $\E_{s\sim \rho} V^{\pi_r}(s)$ plus a behavior cloning term
that keeps $\pi$ close to $\pi_0$ at each state.

For large $T$, the maximizer $\pi$ is $O(1/T)$-close to
$\pi_0$, as the penalty term dominates. This justifies that, in the
following, we consider policies 
that are parameterized as $\Bol(f)$ for some $f$, at temperature $T$.

There are several variants of this definition: e.g., we could have
considered the KL divergence in the opposite direction, or we could have
estimated the KL term at states $s\sim \rho$ visited by $\pi_0$, instead of states $s$
visited by $\pi$ as in \eqref{eq:G}. In the regime we consider (large $T$), these variants
all lead to the same conclusions, because the differences are $O(1/T^2)$.
\option{ add proposition: same 1st order
Taylor expansion} For instance, using a behavior cloning penalty
\TODO{REF}
\begin{equation}
\E_{s\sim \rho,a\sim \pi_0(s)} \ln \pi(a|s)
\end{equation}
would provide the same conclusions.
For large $T$, maximizing the BC-regularized return
is also equivalent to performing one step of natural policy gradient starting
at $\pi_0$, with learning rate $1/T$\option{ (Section~\ref{sec:proofs},
Proposition~\ref{prop:BCisnatgrad})}. Thus, for large $T$, the
regularized return has many different interpretations.

One can check that Boltzmann policies given by the $Q$-functions of
$\pi_0$ are approximately regularized-optimal
(Corollary~\ref{enonce:optpol}). But we will need a finer result, which
describes the optimality gap depending on the policy, as follows.


\option{ move the theorem later and just keep the proposition here}

\begin{thm}[ (Regularized return of Boltzmann policies)]
\label{thm:return}
Let $r$ be any reward function, and let $Q^{\pi_0}_r$
be the $Q$-function of the reference policy for reward $r$.
Let $\hat Q$ be any function on $S\times A$, and consider the policy
$\pi=\Bol(\hat Q)$. 

When $T\to \infty$, the regularized return of policy $\pi$ satisfies
\begin{equation}
G^{\pi}_r=G^{\pi_0}_r+\frac{1}{2T(1-\gamma)}
\left(\norm{Q^{\pi_0}_r}^2_A-\norm{\hat Q-Q^{\pi_0}_r}^2_A\right) +O(1/T^2).
\end{equation}
\option{expression for Q-func.}
\end{thm}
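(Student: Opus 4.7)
The plan is to expand both the reward term and the KL penalty in $G^\pi_r$ to order $1/T$, exploiting that $\pi=\Bol(\hat Q)$ is $O(1/T)$-close to $\pi_0$. The starting point is the first-order Taylor expansion of the Boltzmann policy,
\begin{equation}
\pi(a|s) = \pi_0(a|s)\Bigl(1+\tfrac{1}{T}(\hat Q(s,a)-\bar{\hat Q}(s))\Bigr)+O(1/T^2),\qquad \bar{\hat Q}(s)\deq \E_{a\sim\pi_0(s)}[\hat Q(s,a)],
\end{equation}
so that $\pi-\pi_0=O(1/T)$ pointwise. I would then split $G^\pi_r=\E_{s_0\sim\rho_S}[V^\pi_r(s_0)]-T\cdot K^\pi$, where $K^\pi\deq \E_{s_0\sim\rho_S}\sum_{t\geq 0}\gamma^t\E[\KL{\pi(s_t)}{\pi_0(s_t)}\mid s_0,\pi]$, and handle the two pieces separately; note that $G^{\pi_0}_r=\E_{s_0\sim\rho_S}[V^{\pi_0}_r(s_0)]$ since the KL vanishes at $\pi=\pi_0$.

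For the reward piece I would apply the performance difference lemma,
\begin{equation}
\E_{s_0\sim\rho_S}[V^\pi_r(s_0)-V^{\pi_0}_r(s_0)]=\sum_{t\geq 0}\gamma^t\,\E[A^{\pi_0}_r(s_t,a_t)\mid s_0\sim\rho_S,\pi].
\end{equation}
Because $A^{\pi_0}_r(s,\cdot)$ has zero mean under $\pi_0(s)$, the inner conditional expectation reduces to $\sum_a(\pi-\pi_0)(a|s_t)\,A^{\pi_0}_r(s_t,a)=O(1/T)$. The time-$t$ state-action distribution $\mu^\pi_t$ under $\pi$ differs from $\rho$ (the $\pi_0$-stationary distribution, which is fixed under $\pi_0$ at every time) by $O(t/T)$ in total variation, so swapping $\mu^\pi_t$ for $\rho$ inside the sum costs $\sum_t\gamma^t\cdot O(t/T^2)=O(1/T^2)$. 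After the swap, the $t$-sum collapses to $1/(1-\gamma)$ and, plugging in the expansion of $\pi-\pi_0$ together with $A^{\pi_0}_r=Q^{\pi_0}_r-V^{\pi_0}_r$, one recognizes the inner product in the advantage norm, yielding $\frac{1}{T(1-\gamma)}\langle \hat Q,Q^{\pi_0}_r\rangle_A+O(1/T^2)$.

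For the KL piece, a second-order expansion of KL divergence between close distributions gives $\KL{\pi(s)}{\pi_0(s)}=\frac{1}{2T^2}\E_{a\sim\pi_0(s)}[(\hat Q(s,a)-\bar{\hat Q}(s))^2]+O(1/T^3)$. The same trajectory-swap argument (integrand now $O(1/T^2)$, measure gap $O(1/T)$ with $\gamma^t$-summation) introduces only an $O(1/T^3)$ error, so $T\cdot K^\pi=\frac{1}{2T(1-\gamma)}\norm{\hat Q}^2_A+O(1/T^2)$. Summing the two contributions and completing the square via the identity $2\langle \hat Q,Q^{\pi_0}_r\rangle_A-\norm{\hat Q}^2_A=\norm{Q^{\pi_0}_r}^2_A-\norm{\hat Q-Q^{\pi_0}_r}^2_A$ yields the stated formula. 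The main technical point is the uniform-in-$t$ control needed for the trajectory swap: the naive bound $\|\mu^\pi_t-\rho\|_{\mathrm{TV}}=O(t/T)$ grows linearly in $t$, and one must exploit $\sum_t t\gamma^t<\infty$ to conclude that the discounted total error stays $O(1/T^2)$. A secondary concern is that $\pi$ might assign vanishing probability to actions $a$ with $\pi_0(a|s)=0$, but the Boltzmann definition automatically restricts the support of $\pi$ to that of $\pi_0$, so all quantities ($\KL{\pi}{\pi_0}$, advantage norm, etc.) remain well-defined.
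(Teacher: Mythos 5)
Your proposal is correct and follows essentially the same route as the paper: the same split of $G^\pi_r$ into a return term and a KL penalty term, the same second-order expansion of the KL under the Boltzmann parameterization, the same swap of the time-$t$ state distribution for $\rho$ with the $\sum_t t\gamma^t<\infty$ control, and the same completion of the square in the advantage norm. The only (cosmetic) difference is that you start from the exact performance difference lemma with trajectories under $\pi$ and then transfer to $\pi_0$-trajectories, whereas the paper invokes the policy gradient theorem as a first-order Taylor expansion around $\pi_0$ with an $O((\pi-\pi_0)^2)$ remainder; these are two packagings of the same estimate.
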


In particular, at first order,
this regularized return is maximal when $\hat Q=Q^{\pi_0}_r$,
corresponding to the Boltzmann policy 
$\pi=\Bol(Q^{\pi_0}_r)$. (Note that Boltzmann policies have been defined
with respect to $\pi_0$, so $\hat Q=0$ corresponds to $\pi=\pi_0$.)

\begin{cor}
\label{enonce:optpol}
For any reward $r$, the policy $\pi=\Bol(Q^{\pi_0}_r)$ maximizes
$G^\pi_r$ up to an $O(1/T^2)$ error when $T\to \infty$.
\end{cor}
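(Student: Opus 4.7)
The plan is to apply Theorem~\ref{thm:return} twice---once to the candidate policy $\Bol(Q^{\pi_0}_r)$, and once to the globally regularized-optimal policy $\pi^\star\deq \argmax_\pi G^\pi_r$---and then to compare the two.

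First, setting $\hat Q=Q^{\pi_0}_r$ in Theorem~\ref{thm:return} immediately gives
\begin{equation}
G^{\Bol(Q^{\pi_0}_r)}_r = G^{\pi_0}_r + \frac{1}{2T(1-\gamma)}\,\norm{Q^{\pi_0}_r}_A^2 + O(1/T^2),
\end{equation}
which provides a lower bound on $\sup_\pi G^\pi_r$. For the matching upper bound, I would invoke the standard fact that in a KL-regularized MDP with reference $\pi_0$ and temperature $T$, the globally optimal $\pi^\star$ is itself of Boltzmann form: the soft Bellman fixed-point equation forces $\pi^\star(a\mid s)\propto \pi_0(a\mid s)\exp(Q^{\pi^\star}_r(s,a)/T)$, i.e.\ $\pi^\star = \Bol(Q^{\pi^\star}_r)$ in the paper's notation, with $Q^{\pi^\star}_r$ its ordinary (unregularized) $Q$-function. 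Hence Theorem~\ref{thm:return} applies directly to $\pi^\star$ with $\hat Q=Q^{\pi^\star}_r$, yielding
\begin{equation}
\sup_\pi G^\pi_r \;=\; G^{\pi_0}_r + \frac{1}{2T(1-\gamma)}\Bigl(\norm{Q^{\pi_0}_r}_A^2 - \norm{Q^{\pi^\star}_r - Q^{\pi_0}_r}_A^2\Bigr) + O(1/T^2).
\end{equation}

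The only nontrivial remaining step is to show $\norm{Q^{\pi^\star}_r - Q^{\pi_0}_r}_A^2 = O(1/T^2)$, so that the extra negative term is absorbed into the $O(1/T^2)$ remainder. This is expected from a perturbation argument on the soft Bellman operator: it differs from the ordinary Bellman operator of $\pi_0$ by a LogSumExp correction of size $O(1/T)$ (since $T\log\E_{a\sim\pi_0(s)}\exp(Q/T) = \E_{a\sim\pi_0(s)}[Q] + O(1/T)$ for bounded $Q$), and both operators are $\gamma$-contractions in sup norm, so a Banach-fixed-point estimate gives $\norm{Q^{\pi^\star}_r - Q^{\pi_0}_r}_\infty = O(1/T)$ and therefore $\norm{Q^{\pi^\star}_r - Q^{\pi_0}_r}_A = O(1/T)$. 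Substituting,
\begin{equation}
\sup_\pi G^\pi_r \;=\; G^{\Bol(Q^{\pi_0}_r)}_r + O(1/T^3) + O(1/T^2) \;=\; G^{\Bol(Q^{\pi_0}_r)}_r + O(1/T^2),
\end{equation}
which is the claim.

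The main obstacle is thus the perturbation bound on $Q^{\pi^\star}_r - Q^{\pi_0}_r$, together with identifying $\pi^\star$ as a Boltzmann policy in order to legitimately invoke Theorem~\ref{thm:return} on it. Both ingredients are standard for entropy-regularized MDPs and require no new machinery beyond what has already been developed for Theorem~\ref{thm:return}; the corollary is essentially a direct reading of that theorem once the global optimum has been placed inside the family $\Bol(\hat Q)$.
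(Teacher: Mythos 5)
Your proposal is correct, but it takes a more laborious route than the paper, which treats the corollary as immediate from Theorem~\ref{thm:return}: since the theorem exhibits the leading-order gain as $\frac{1}{2T(1-\gamma)}\bigl(\norm{Q^{\pi_0}_r}^2_A-\norm{\hat Q-Q^{\pi_0}_r}^2_A\bigr)$ and the subtracted term is nonnegative for \emph{every} $\hat Q$, no policy of the form $\Bol(\hat Q)$ can beat $\Bol(Q^{\pi_0}_r)$ by more than $O(1/T^2)$; combined with the earlier observation that for large $T$ the global maximizer is $O(1/T)$-close to $\pi_0$ and hence lies in the Boltzmann family, this closes the argument without ever identifying the optimizer. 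You instead pin down $\pi^\star$ via the soft-Bellman fixed point and prove $\norm{Q^{\pi^\star}_r-Q^{\pi_0}_r}_A=O(1/T)$ by a contraction/perturbation estimate. That buys you something the paper's shortcut does not: you only invoke the $O(1/T^2)$ remainder of Theorem~\ref{thm:return} at two specific, bounded choices of $\hat Q$, sidestepping the question of whether that remainder is uniform over all $\hat Q$ (which the paper's ``maximal when $\hat Q=Q^{\pi_0}_r$'' phrasing quietly glosses over). Two minor cautions: the soft-Bellman fixed point gives $\pi^\star(a\mid s)\propto\pi_0(a\mid s)\exp(Q^{\star,\mathrm{soft}}(s,a)/T)$ with the \emph{soft} (KL-penalized) $Q$-function, not the ordinary $Q^{\pi^\star}_r$ you name; the two differ by the accumulated penalties $T\,\KL{\pi^\star}{\pi_0}=O(1/T)$, which after division by $T$ perturbs the policy by only $O(1/T^2)$, so your identification is harmless but should be stated as holding up to that order. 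And strictly speaking you do not need the perturbation bound at all---the inequality $\norm{\hat Q-Q^{\pi_0}_r}^2_A\geq 0$ already suffices for the upper bound.
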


Policy gradient algorithms such as PPO and TRPO also learn a new policy
$\pi_{t+1}$ by maximizing expected return subject to a constraint on the
KL divergence between $\pi_{t+1}$ and the current policy $\pi_t$
\cite{ppo,trpo}. The
KL constraint is implemented in a different way, not directly as a
penalty as in \eqref{eq:regulr}, but the intuition is similar, with 
Boltzmann policies $\pi_0\exp(\hat Q/T)$
corresponding to updating the log-probabilities of the policy $\pi_t$,
with learning rate $1/T$.

Therefore, qualitatively, Theorem~\ref{thm:return} stresses that the
optimality gap in KL-regularized policy gradient updates is directly given by
the $L^2$ error on the advantage function. \option{make a formal
statement with natural policy gradient}

\subsection{Successor Features for Zero-Shot RL, Regularized Successor
Features}

%
%
%
%
%

Successor features pre-compute the $Q$-functions of a number of basic reward
functions $\phi_1,\ldots,\phi_d$ or their linear combinations. At test
time, the reward function for the test task is projected onto this basis,
and a precomputed policy is applied
\cite{barreto2017successor,borsa2018universal}. This results in a
\emph{zero-shot} approach to new RL tasks in a given environment, since
no learning or planning is needed at test time. The only computation at
test time is the coefficients $z$ of the linear projection of the reward
$r$ onto the features $\phi$,
\begin{equation}
z=(\E_{(s,a)\sim \rho} [\phi(s,a)\transp{\phi(s,a)}])^{-1}
\E_{(s,a)\sim \rho}[r(s,a)\phi(s,a)]
\end{equation}
which only requires to estimate the correlation $\E_{(s,a)\sim
\rho}[r(s,a)\phi(s,a)]$ between the reward and the features. (The
covariance matrix can be precomputed.) This correlation can be estimated
empirically given some reward samples.

\bigskip

In this text, since we are concerned with \emph{regularized} policies and
return, we will only need the simplest version of successor features, in
which we only compute successor features with respect to the reference policy
$\pi_0$. This plays out as follows.

\begin{defi}[ (Regularized successor features)]
\label{def:rsf}
Regularized successor features (RSFs) is the following procedure for
regularized zero-shot RL.

Let $\phi\from S\times A\to
\R^d$ be a fixed $d$-dimensional feature map. At train time, compute a successor feature map $\psi\from
S\times A\to \R^d$ satisfying
\begin{equation}
\psi(s_0,a_0)=\E\left[
\sum_{t\geq 0} \gamma^t \phi(s_t,a_t) \mid s_0,a_0,\pi_0
\right],
\end{equation}
namely, $\psi$ solves the vector-valued Bellman equation
$\psi=\phi+\gamma P_{\pi_0} \phi$.
Also denote $C\deq \transp{\phi}\drho\phi=
\E_{(s,a)\sim \rho}
[\phi(s,a)\transp{\phi(s,a)}]$.

Then, at test time, given any reward function $r$, estimate 
\begin{equation}
\label{eq:zrsf}
z=C^{-1}
 \E_{(s,a)\sim \rho} [r(s,a)\phi(s,a)],
\end{equation}
estimate the $Q$-function of $\pi_0$ for $r$ by 
\begin{equation}
\label{eq:Qhatrsf}
\hat Q(s,a)\deq
\transp{z}\psi(s,a)
\end{equation}
and apply the Boltzmann policy
 $\hat \pi=\Bol(\hat Q)$.
\end{defi}

The following proposition is an immediate consequence of
Corollary~\ref{enonce:optpol}.

\begin{prop}
If $r$ lies in the linear span of the features $\phi$,  then
the policy $\hat \pi$ is regularized-optimal for $r$, up to an error $O(1/T^2)$.
\end{prop}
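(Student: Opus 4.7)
The plan is to reduce this proposition to Corollary~\ref{enonce:optpol} by showing that, when $r$ lies in the linear span of $\phi$, the estimate $\hat Q$ built by regularized successor features coincides \emph{exactly} (not just up to $O(1/T^2)$) with the true $Q$-function $Q^{\pi_0}_r$ of the reference policy. Once this identification is established, the policy $\hat\pi=\Bol(\hat Q)$ is literally the Boltzmann policy $\Bol(Q^{\pi_0}_r)$, and the corollary directly gives an $O(1/T^2)$ optimality bound.

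The key step is therefore the linearity computation. Write $r=\transp{z_0}\phi$ for some $z_0\in\R^d$, which is possible by the linear-span assumption. I would proceed in two substeps. First, because the Bellman operator $Q\mapsto r+\gamma P_{\pi_0} Q$ is affine and $\psi$ solves the vector-valued Bellman equation $\psi=\phi+\gamma P_{\pi_0}\psi$, the scalar function $\transp{z_0}\psi$ satisfies $\transp{z_0}\psi=\transp{z_0}\phi+\gamma P_{\pi_0}(\transp{z_0}\psi)=r+\gamma P_{\pi_0}(\transp{z_0}\psi)$, which by uniqueness of the Bellman fixed point for $\gamma<1$ gives $Q^{\pi_0}_r=\transp{z_0}\psi$. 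Second, the projection coefficient $z$ defined by \eqref{eq:zrsf} recovers $z_0$: since $r=\transp{z_0}\phi$,
\begin{equation}
\E_{(s,a)\sim\rho}[r(s,a)\phi(s,a)]=\E_{(s,a)\sim\rho}[\phi(s,a)\transp{\phi(s,a)}]\,z_0=C\,z_0,
\end{equation}
so $z=C^{-1}Cz_0=z_0$ (assuming $C$ is invertible, i.e.\ the features are linearly independent in $L^2(\rho)$; otherwise $z_0$ can be chosen in the range of $C$ without loss of generality). Therefore $\hat Q=\transp{z}\psi=\transp{z_0}\psi=Q^{\pi_0}_r$.

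Plugging $\hat Q=Q^{\pi_0}_r$ into Theorem~\ref{thm:return} gives $G^{\hat\pi}_r=G^{\pi_0}_r+\frac{1}{2T(1-\gamma)}\norm{Q^{\pi_0}_r}^2_A+O(1/T^2)$, which matches the regularized return of the optimal policy $\Bol(Q^{\pi_0}_r)$ appearing in Corollary~\ref{enonce:optpol}, concluding the proof. I do not anticipate any real obstacle here: the only subtlety is the well-definedness of $z$ when $C$ is singular, but this is a standard nondegeneracy assumption on the feature map that can be stated explicitly or circumvented by working modulo the kernel of $C$, since any $r$ in the span of $\phi$ admits a representative $z_0$ in the range of $C$.
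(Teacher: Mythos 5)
Your proof is correct and follows exactly the route the paper intends: the paper dismisses this proposition as an ``immediate consequence of Corollary~\ref{enonce:optpol},'' and the content you supply --- that $\hat Q=Q^{\pi_0}_r$ exactly when $r$ lies in the span, via the vector-valued Bellman equation for $\psi$ and the projection recovering the coefficients --- is precisely the omitted verification. The handling of a singular $C$ is a harmless extra precaution; the paper sidesteps it by assuming linearly independent features and normalizing $C=\Id$.
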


The $Q$-functions and policies recovered by SFs only depend on the linear
span of the features $\phi$. Thus, without loss of generality, we can
assume that the features are linearly independent and apply a
change of basis $\phi\gets C^{-1/2} \phi$ in feature space, after which
$C=\Id$. Thus, in the following, we always assume $C=\Id$.

Although derived in a different way, the forward-backward (FB) setup
from \cite{allpolicies,zeroshot} can also be seen as projecting the rewards onto trained
features $\phi(s,a)=(\Cov_\rho B)^{-1} B(s,a)$ at test time, where
$B$ are the features learned by the FB model \cite{zeroshot}. Therefore, our conclusions also cover this
case.

\section{Three Reward Models for Downstream Tasks}
\label{sec:rmodels}

We will compute the average performance of successor features for three
families of rewards: random Gaussian rewards (with a white noise
continuous limit), random goal-reaching (Dirac
rewards), and ``scattered random rewards''. We define each of those in
turn.

All the models depend on the distribution $\rho$ of states in the
training data (the stationary distribution of the exploration policy
$\pi_0$): either via the norm $L^2(\rho)$, or via putting rewards at
random states sampled from $\rho$. In practice, both can be estimated by
sampling random states from the
training data. \footnote{We could also have used uniform measures on
finite state spaces, or the Lebesgue measure on continuous states. But, first,
this does not extend to an abstract state space equipped with a policy
$\pi_0$, for which the $L^2(\rho)$ norm is mathematically the most natural
and the only norm available. Second, in practice, it is much more natural
to sample states from the training data than to sample random states from the
Lebesgue measure in a large domain, which might produce irrelevant or
irrealistic states, while $\rho$ will be supported on realistic states.
}

These are some of the most agnostic models we can find on
an arbitrary state equipped with an arbitrary probability distribution.
These models do not favor spatial smoothness a priori: white noise is
non-smooth and scale-free, while the goal-reaching and scattered rewards
are sparse. In these models, the Fourier transform of the reward is
uniformly spread over all frequencies. It is interesting that we can
reach meaningful conclusions about optimal features even with such
uninformative priors.

All models are built to have well-defined continuous-space limits, and
still make sense in an abstract state space equipped with a measure
$\rho$. To avoid excessive technicality, we restrict ourselves to
the finite case in this text.

\paragraph{Gaussian rewards (white noise).} For this model, we simply
sample a random Gaussian reward vector $r$ of size $S\times A$, with density 
$\propto
\exp(-\norm{r}^2_{L^2(\rho)}/2)$.
Including a variance $\sigma^2$ just rescales the rewards, so we take
$\sigma=1$ for simplicity.

The corresponding continuous-space limit is a \emph{white noise} random
reward: a random distribution $r(s,a)$ such that for any subset $X\subset
S\times A$,
the integral $\int_X r(s,a) \rho(\d s,\d a)$ is a centered Gaussian with
variance $\rho(X)$, and the integrals on two disjoint subsets $X$ and
$X'$ are independent.

This model naturally places more variance (uncertainty) on the reward at
places less covered by the training set, since the weight from
$\norm{r}^2_{L^2(\rho)}$ will be smaller where $\rho$ is small.

\paragraph{Goal-reaching (Dirac rewards).}  In this model, we first
select a random state-action $(s^\star,a^\star)\sim \rho$ in $S\times A$. Then we put
a reward $1/\rho(s^\star,a^\star)$ at $(s^\star,a^\star)$, and $0$
everywhere else:
\begin{equation}
r(s,a)=\frac{1}{\rho(s^\star,a^\star)} \,\1_{(s,a)=(s^\star,a^\star)}.
\end{equation}

The $1/\rho$ factor maintains $\int r\d \rho=1$. Without this scaling,
all $Q$-functions degenerate to $0$ in continuous spaces, 
as discussed in
\cite{blier2021unbiased}. Indeed, if we omit this factor, and just set the reward to be $1$ at
a given goal state $s^\star\in S$ in a continuous space $S$, the probability of
exactly reaching that state with a stochastic policy is usually $0$, and all
$Q$-functions are $0$. Thanks to the $1/\rho$ factor, the
continuous limit is a \emph{Dirac function} reward, infinitely sparse,
corresponding to the limit of putting a reward $1$ in a small ball
$B(s^\star,\eps)$ of radius $\eps\to 0$ around $s^\star$, and rescaling by
$1/\rho(B(s^\star,\eps))$ to keep $\int r\d \rho=1$. This produces meaningful,
nonzero $Q$-functions in the continuous limit \cite{blier2021unbiased}.

This model combines well with successor features or the FB framework:
indeed, the task representation vector $z$ in \eqref{eq:zrsf} can be computed via the
expectation 
\begin{equation}
\E_{(s,a)\sim \rho} [r(s,a)\phi(s,a)]=\phi(s^\star,a^\star)
\end{equation}
(both in finite spaces and in the continuous-space limit).

\paragraph{Scattered random rewards.} This model amounts to putting Dirac
rewards at several states instead of one, each with a different random
magnitude and sign. This describes general mixtures of sparse rewards at
several random locations.

This model depends on an intensity parameter $\kappa>0$ which
controls the number of states that have a nonzero reward. We also fix some
arbitrary distribution $p_w$ over $\R$ with some mean
$\mu$ and variance $\sigma^2\geq 0$, e.g., a Gaussian.

To build a scattered random reward, we sample a random
$N$-tuple of state-actions $((s_1,a_1),\ldots,(s_N,a_N))$ by a general
Poisson point process on $S\times A$ with intensity
$\kappa \rho$: namely, we first sample an integer $N\sim
\mathrm{Poisson}(\kappa)$, then sample $N$ random state-actions
$(s_i,a_i)\sim \rho$, $i=1,\ldots, N$, independently.
\footnote{
The Poisson law ensures that the number of states selected in some part
of $S\times A$ is independent from the number of states selected in any other
part.
In this process, the number of state-actions
selected in any part $X\subset S\times A$ follows a Poisson law with
parameter $\kappa \rho(X)$, with distinct subsets being independent.
Once more, this ensures a meaningful continuous-time limit,
as the general Poisson process is well-defined in a continuous space with
measure $\rho$.}
At each $(s_i,a_i)$, we place a
Dirac reward as above, but multiply it by some
random weight $w_i\sim p_w$, sampled independently from everything else. The reward is $0$
on the rest of the space. Explicitly,
\begin{equation}
r(s,a)=\sum_{i=1}^N \frac{w_i}{\rho(s_i,a_i)}
\1_{(s,a)=(s_i,a_i)}.
\end{equation}
As for Dirac rewards above, the factor $1/\rho$ ensures a
meaningful limit for continuous spaces (where $r$ becomes a random
distribution, a random sum of Dirac functions wrt $\rho$). In this model,
the SF task representation vector $z$ in \eqref{eq:zrsf} is given by
\begin{equation}
z=C^{-1}\sum_{i=1}^N w_i\, \phi(s_i,a_i)
\end{equation}
both in finite spaces and in the continuous-space limit.

\section{The Bellman Gap Norm is Uninformative for Successor Features}

We start with a negative result: for the families of rewards above, the
size of expected Bellman gaps of the $Q$-functions estimated by SFs is
\emph{independent} of the choice of features: it only depends on the
number of linearly independent features.

This \emph{does not mean that all choices of features perform equally
well}: as we will see, regularized returns do depend on the features. It
just means that the squared Bellman gap error is a poor proxy.

\begin{prop}[ (Average Bellman gaps do not depend on the features)]
\label{prop:gap}
Given a reward $r$, let $\hat Q(s,a)\deq \transp{z}\psi(s,a)$ be the
$Q$-function \eqref{eq:Qhatrsf} estimated by regularized successor features, with $z$ given
by \eqref{eq:zrsf}. Assume the features are linearly independent.

Then, for either the random Gaussian reward or the random goal-reaching reward of Section~\ref{sec:rmodels}, on
average for $r$ in the model,
the norm of the Bellman gaps of $\hat Q$ only depends on the
number of features $d$. More precisely,
\begin{equation}
\E_r \norm{\hat Q-r-\gamma P_{\pi_0} \hat Q}^2_{L^2(\rho)}=\#S\times \#A-d
\end{equation}
where the expectation is over a random reward $r$ from the model.
\option{ scattered model}
\end{prop}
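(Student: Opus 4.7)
My plan is to express the Bellman gap of $\hat Q$ cleanly as an $L^2(\rho)$-orthogonal projection error on $r$, then take expectations using the defining properties of each reward model. The main simplification comes from the fact that, with the normalization $C=\Id$, the matrix $\Pi\deq \phi\transp{\phi}\drho$ is precisely the $L^2(\rho)$-orthogonal projector onto the span of the features (it is idempotent because $\transp{\phi}\drho\phi=\Id$, and self-adjoint in $L^2(\rho)$). This reduction turns what looks like a statement about successor features into a trace computation for a rank-$d$ projector.

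First I would rewrite the successor feature Bellman equation $\psi=\phi+\gamma P_{\pi_0}\psi$ as $\Delta\psi=\phi$, so that $\psi=\Delta^{-1}\phi$ and $\hat Q=\psi z=\Delta^{-1}\phi\transp{\phi}\drho r$. Then the Bellman gap simply becomes
\begin{equation}
\hat Q-r-\gamma P_{\pi_0}\hat Q=\Delta\hat Q-r=\phi\transp{\phi}\drho r-r=-(\Id-\Pi)r.
\end{equation}
Using that $\Id-\Pi$ is $L^2(\rho)$-self-adjoint and idempotent,
\begin{equation}
\norm{\hat Q-r-\gamma P_{\pi_0}\hat Q}^2_{L^2(\rho)}=\transp{r}\drho(\Id-\Pi)r.
\end{equation}

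For the Gaussian model, $r$ has Euclidean covariance $\drho^{-1}$ (since its density is $\propto\exp(-\transp{r}\drho r/2)$), so
\begin{equation}
\E_r\transp{r}\drho(\Id-\Pi)r=\tr\bigl(\drho(\Id-\Pi)\drho^{-1}\bigr)=\tr(\Id-\Pi)=\#S\,\#A-d,
\end{equation}
using $\tr(\Pi)=\tr(\phi\transp{\phi}\drho)=\tr(\transp{\phi}\drho\phi)=\tr(\Id_d)=d$. For the goal-reaching model, plugging in $r=\1_{(s^\star,a^\star)}/\rho(s^\star,a^\star)$ gives
\begin{equation}
\transp{r}\drho(\Id-\Pi)r=\frac{(\Id-\Pi)_{(s^\star a^\star),(s^\star a^\star)}}{\rho(s^\star,a^\star)},
\end{equation}
and averaging over $(s^\star,a^\star)\sim\rho$ cancels the $1/\rho$ factor, leaving $\sum_{(s,a)}(\Id-\Pi)_{(sa),(sa)}=\tr(\Id-\Pi)=\#S\,\#A-d$. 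The scattered-reward case, if included, follows by the same calculation combined with Campbell's formula for the Poisson point process, reducing to the Dirac computation plus a cross-term that vanishes in expectation by independence of the weights $w_i$.

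There is no real obstacle here: the whole point is that the particular choice of $\phi$ disappears after one notices that $\phi$ enters only through the projector $\Pi$, whose trace is determined by its rank. The mild care needed is to keep track of $\drho$ versus its inverse when switching between Euclidean and $L^2(\rho)$ inner products, and to verify that $\Pi$ is genuinely the $L^2(\rho)$-orthogonal projector (which uses the normalization $C=\Id$ discussed after Definition~\ref{def:rsf}).
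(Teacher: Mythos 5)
Your proposal is correct and follows essentially the same route as the paper: reduce the Bellman gap to the projection residual $-(\Id-\Pi)r$ with $\Pi=\phi\transp{\phi}\drho$, then compute the expectation via the second moment $\E[r\transp{r}]=\drho^{-1}$ (which you derive inline for each model, where the paper isolates it as Proposition~\ref{prop:Err}) and the trace identity $\Tr(\Id-\Pi)=\#S\times\#A-d$. No gaps.
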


So, even if we had access to downstream reward functions $r$, we could
not learn good features by minimizing the expected Bellman gaps on those
rewards.  No meaningful analysis of the performance of a set of features
is possible based on the norm of Bellman errors.

We have expressed this theorem using
Bellman gaps with respect to the reference policy $\pi_0$: this is what is needed for
regularized SFs, thanks to Theorem~\ref{thm:return} and
Corollary~\ref{enonce:optpol}. However, a
similar
result holds for \emph{universal} successor features
\cite{borsa2018universal}, with Bellman gaps
taken for the estimated optimal policy $\hat \pi$ for each reward
$r$: the proof in Section~\ref{sec:proofs} directly covers this case
as well.

\bigskip

It might be surprising that the expected norm of Bellman gaps does not
align well with the optimal return, since in practice, $Q$-functions are
routinely learned by the TD algorithm based on reducing Bellman gaps. But
TD does not actually minimize the expected norm of Bellman gaps, due to
the ``double sampling problem'' and updating only the $Q$ in the
left-hand-side of the Bellman equation. In general, TD does not minimize
any norm and can diverge \cite{tsitsiklis1997approxtd}, but in certain cases it is known to
minimize the \emph{Dirichlet norm} \footnote{The Dirichlet (semi)norm of
a function $f$ is $\langle f,(\Id -\gamma
P_{\pi})f\rangle_{L^2(\rho)}\geq 0$, also equal to $\frac12\E
(f(s_{t+1},a_{t+1})-f(s_t,a_t))^2)$. We refer to \cite{tdconv}. This is also
the objective minimized in Laplacian eigenfunctions.} of the error on $Q$
\cite{tdconv}. For
$\gamma$ close to $1$, the Dirichlet norm is the sum of the Bellman gap
norm and the advantage seminorm (Proposition~\ref{prop:Anorm} with
$\gamma=1$\option{make that a separate statement}). The advantage seminorm
is the one that matters for KL-regularized policy gradient
(Theorem~\ref{thm:return}). So in
the end, the discrepancy between TD and minimizing expected Bellman gaps
might be a blessing.

\todo{explain: expected BG norm is just reconstruction error on $r$. This
completely ignores the dynamics, and also includes some irrelevant stuff
eg, adding constants to rewards. }

\section{The Optimal Features for Regularized Successor Features}

We first introduce the \emph{advantage kernel}, a symmetric matrix that
describes the norm of the advantage function for a given reward function.

\begin{defi}
\label{def:AK}
Let $\pi$ be a fixed policy with invariant distribution $\rho$, and consider the map that to any reward
function $r$ associates
the norm $\norm{A^{\pi}_r}^2_{L^2(\rho)}$ of its advantage function.
Since the $Q$-function is linear in $r$ for a given $\pi$, this is a
quadratic function of $r$ for fixed $\pi$.

Therefore, there exists a symmetric positive semi-definite matrix\footnote{or
positive semi-definite kernel
for infinite state spaces $S$} $\AK_\pi$ of size $(\#S\times \#A)\times
(\#S\times \#A)$ such that
\begin{equation}
\norm{A^{\pi}_r}^2_{L^2(\rho)}=\transp{r} \AK_\pi r
\end{equation}
for any $r$. We call this matrix the \emph{advantage kernel} of policy
$\pi$.
\end{defi}

Since the $Q$-function of reward $r$ for policy $\pi_0$ is
$Q^{\pi_0}_r=\Delta^{-1}r$, one can compute $\AK_{\pi_0}$ 
in terms of $\Delta^{-1}$. Expressing the norm of the
advantage function as the difference between the norms of the
$Q$-function and value function, 
$\norm{Q^{\pi_0}_r}^2_A=\norm{Q^{\pi_0}_r}^2_{L^2(\rho)}-\norm{\pi_0
Q^{\pi_0}_r}^2_{L^2(\rho_S)}=\transp{(Q^{\pi_0}_r)}(\drho-\transp{\pi_0}\drho_S\pi_0)Q^{\pi_0}_r$,
we obtain
\begin{equation}
\AK_{\pi_0}=\transp{(\Delta^{-1})} (\drho
-\transp{\pi_0}\drho_S\pi_0)\Delta^{-1}.
\end{equation}

\option{ should we use $\langle r, \AK_\pi r\rangle_{L^2(\rho)}$
everywhere
instead?}

\bigskip

The following theorem expresses the expected regularized return for
regularized successor features depending on the features, for each of the
three reward models defined in Section~\ref{sec:rmodels}, and up to an
$O(1/T^2)$ error. The optimal
choice of features corresponds to the features that maximize
$\transp{x}\AK_{\pi_0}x$ for a given norm $\norm{x}^2_{L^2(\rho)}$.

\begin{thm}[ (Expected regularized return depending on the features)]
\label{thm:featurereturn}
Let $\phi$ be a set of basic features for successor features. Without
loss of generality, we
assume that these features are $L^2(\rho)$-orthonormal.

Given a reward $r$, let $\hat Q(s,a)\deq \transp{z}\psi(s,a)$ be the
$Q$-function \eqref{eq:Qhatrsf} estimated by regularized successor features, with $z$ given
by \eqref{eq:zrsf}. Let $\hat \pi=\Bol(\hat Q)$ be the estimated
regularized-optimal policy.

Then, for the reward models of Section~\ref{sec:rmodels}, on
average for $r$ in the model,
the regularized return $G^{\hat \pi}_r$ of the estimated policy
satisfies:
\begin{itemize}
\item
For the random Gaussian or random goal-reaching reward models,
\begin{equation}
\E_r [G^{\hat \pi}_r]=\E_r [G^{\pi_0}_r] + \frac{1}{T(1-\gamma)}
\Tr(\transp{\phi}\AK_{\pi_0}\phi) + O(1/T^2).
\end{equation}
\item
For the scattered random reward model with intensity $\kappa$, mean $\mu$
and variance $\sigma^2$,
\begin{equation}
\label{eq:scatteredgain}
\E_r [G^{\hat \pi}_r]=\E_r [G^{\pi_0}_r] +
\frac{1}{T(1-\gamma)}\left(
\kappa (\mu^2+\sigma^2)\Tr(\transp{\phi}\AK_{\pi_0}\phi)
-(\kappa\mu)^2 \transp{\phi_\mathrm{cst}}\,\AK_{\pi_0}\phi_\mathrm{cst}
\right)
+ O(1/T^2)
\end{equation}
where $\phi_\mathrm{cst}$ is the $L^2(\rho)$-orthogonal projection of the
constant reward $r=\1$ onto the span of the features.

\end{itemize}
\end{thm}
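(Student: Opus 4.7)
The plan is to reduce pointwise in $r$ to Theorem~\ref{thm:return}, recast the resulting $L^2(\rho)$-error on $\hat Q$ as a single quadratic form in $r$ via the advantage kernel $\AK_{\pi_0}$, and then integrate that quadratic form against each of the three reward distributions.

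First I would identify what the SF estimator is doing. Since the features are $L^2(\rho)$-orthonormal we have $C=\Id$, so \eqref{eq:zrsf} gives $z=\transp{\phi}\drho r$ and, using $\psi=\Delta^{-1}\phi$,
\begin{equation}
\hat Q=\transp{\psi}z=\Delta^{-1}\phi\transp{\phi}\drho\, r=Q^{\pi_0}_{Pr},
\end{equation}
where $P\deq\phi\transp{\phi}\drho$ is the $L^2(\rho)$-orthogonal projector onto $\mathrm{span}(\phi)$. Hence $\hat Q-Q^{\pi_0}_r=-\Delta^{-1}(\Id-P)r$, and Definition~\ref{def:AK} gives
\begin{equation}
\norm{Q^{\pi_0}_r}^2_A=\transp{r}\AK_{\pi_0}r,\qquad \norm{\hat Q-Q^{\pi_0}_r}^2_A=\transp{r}\,\transp{(\Id-P)}\AK_{\pi_0}(\Id-P)\,r.
\end{equation}
Plugging into Theorem~\ref{thm:return} yields the pointwise expansion
\begin{equation}
G^{\hat\pi}_r=G^{\pi_0}_r+\frac{1}{2T(1-\gamma)}\,\transp{r}Mr+O(1/T^2),\qquad M\deq\AK_{\pi_0}-\transp{(\Id-P)}\AK_{\pi_0}(\Id-P),
\end{equation}
with $M$ independent of $r$.

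It then remains to integrate $\transp{r}Mr$ against each reward model. For the Gaussian density $\propto\exp(-\transp{r}\drho r/2)$ and for the Dirac reward $r=\drho^{-1}e_{(s^\star,a^\star)}$ with $(s^\star,a^\star)\sim\rho$, both give the same second moment $\E_r[r\transp{r}]=\drho^{-1}$, hence $\E_r[\transp{r}Mr]=\Tr(M\drho^{-1})$. Using the orthonormality $\transp{\phi}\drho\phi=\Id$, the identity $(\Id-P)\drho^{-1}\transp{(\Id-P)}=\drho^{-1}-\phi\transp{\phi}$ then collapses $\Tr(M\drho^{-1})$ to $\Tr(\transp{\phi}\AK_{\pi_0}\phi)$, proving the first bullet. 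For the scattered model I would write $r=\sum_i w_i\drho^{-1}e_{(s_i,a_i)}$ and apply the standard Campbell second-moment identities for a Poisson point process of intensity $\kappa\rho$: the diagonal ($i=j$) contribution is $\kappa(\mu^2+\sigma^2)\Tr(M\drho^{-1})$ as in the Dirac case, and the off-diagonal ($i\neq j$) contribution is $(\kappa\mu)^2\transp{\1}M\1$. The algebraic key is that the constant reward has zero advantage, i.e.\ $Q^{\pi_0}_\1=\1/(1-\gamma)$ is constant so $\transp{\1}\AK_{\pi_0}\1=0$; since $\AK_{\pi_0}$ is positive semidefinite this forces $\AK_{\pi_0}\1=0$, and combined with $(\Id-P)\1=\1-\phi_{\mathrm{cst}}$ it collapses $\transp{\1}M\1$ to $-\transp{\phi_{\mathrm{cst}}}\AK_{\pi_0}\phi_{\mathrm{cst}}$, yielding the second bullet.

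I expect the main obstacle to be not any individual algebraic step but justifying that the $O(1/T^2)$ remainder from Theorem~\ref{thm:return} is uniform enough in $r$ for the expectation to commute with it. For Dirac and scattered rewards this is essentially free (bounded support plus light Poisson/weight tails), but for Gaussian rewards one would need the remainder constant to grow at most polynomially in $\norm{r}_{L^2(\rho)}$, which is then absorbed by the finiteness of all Gaussian moments of quadratic forms. Granted this uniformity, the stated formulas follow by linearity of expectation.
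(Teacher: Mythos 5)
Your proposal is correct and follows essentially the same route as the paper: reduce pointwise to Theorem~\ref{thm:return}, rewrite both advantage norms as quadratic forms in $r$ via $\AK_{\pi_0}$ and the projector $\Pi=\phi\transp{\phi}\drho$, take the expectation using the second moment $\E_r[r\transp{r}]$ of each reward model (the paper's Proposition~\ref{prop:Err}, which you rederive via Campbell's formula for the scattered case), and kill the extra constant-direction term using $\AK_{\pi_0}\1=0$. Your closing remark about the uniformity in $r$ of the $O(1/T^2)$ remainder when commuting it with $\E_r$ is a fair point that the paper leaves implicit, but it does not change the argument.
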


This result allows us to work out which features optimize the gain: those
that maximise $\Tr(\transp{\phi}\AK_{\pi_0}\phi)$ under the constraint
that $\phi$ is $L^2(\rho)$-orthonormal, leading to the following
characterization.  Scattered random rewards
require a slightly longer proof to handle the extra term in
\eqref{eq:scatteredgain}, but the conclusion is the same.

\begin{cor}[ (Optimal features for regularized successor features)]
\label{cor:optfeatures}
For any of the three reward models of Section~\ref{sec:rmodels}, the
features $\phi$ that bring maximal regularized return up to $O(1/T^2)$ are
the largest $d$ eigenvectors of $\drho^{-1}\AK_{\pi_0}$, or equivalently
the
largest $d$ extremal directions of
$\transp{x}\AK_{\pi_0}x/\norm{x}^2_{L^2(\rho)}$.
In that case we have
\begin{equation}
\Tr(\transp{\phi}\AK_{\pi_0} \phi)=\sum_{i=1}^d \lambda_i
\end{equation}
where $\lambda_1,\ldots,\lambda_d$ are the largest $d$ eigenvalues of
$\drho^{-1}\AK_{\pi_0}$. \footnote{The operator $\drho^{-1}\AK_{\pi_0}$ is
self-adjoint in $L^2(\rho)$, so it is diagonalizable. Its eigenvalues are
the same as those of the symmetric matrix $\drho^{-1/2}\AK_{\pi_0}\drho^{-1/2}$.}
\end{cor}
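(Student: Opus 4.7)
The plan is to reduce the statement to a classical trace-maximization under an $L^2(\rho)$-orthonormality constraint, and then handle the extra term that appears in the scattered case. By Theorem~\ref{thm:featurereturn}, for the Gaussian and goal-reaching models, maximizing $\E_r[G^{\hat\pi}_r]$ up to $O(1/T^2)$ is equivalent to maximizing $\Tr(\transp{\phi}\AK_{\pi_0}\phi)$ over $\phi$ satisfying $\transp{\phi}\drho\phi=\Id$.

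To symmetrize, I would perform the change of variables $\psi=\drho^{1/2}\phi$, after which the constraint becomes $\transp{\psi}\psi=\Id$ and the objective becomes $\Tr(\transp{\psi}M\psi)$ with $M\deq\drho^{-1/2}\AK_{\pi_0}\drho^{-1/2}$, a symmetric positive semidefinite matrix. Ky Fan's trace maximum principle (equivalently, the Courant--Fischer characterization applied to orthonormal $d$-frames) then gives
\begin{equation}
\max_{\transp{\psi}\psi=\Id}\Tr(\transp{\psi}M\psi)=\sum_{i=1}^d\lambda_i,
\end{equation}
attained when the columns of $\psi$ form an orthonormal basis of the top eigenspace of $M$. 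Since $M$ is conjugate to $\drho^{-1}\AK_{\pi_0}$ via $\drho^{1/2}$, the two share eigenvalues; unwinding the change of variables, the optimal $\phi=\drho^{-1/2}\psi$ are exactly the top $d$ eigenvectors of $\drho^{-1}\AK_{\pi_0}$, which are also the top $d$ extremal directions of the generalized Rayleigh quotient $\transp{x}\AK_{\pi_0}x/\norm{x}^2_{L^2(\rho)}$.

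For the scattered case, the extra term $-(\kappa\mu)^2 \transp{\phi_{\mathrm{cst}}}\AK_{\pi_0}\phi_{\mathrm{cst}}$ in \eqref{eq:scatteredgain} is always nonpositive (since $\AK_{\pi_0}$ is PSD), so it suffices to exhibit a choice of $\phi$ that simultaneously maximizes the trace term and makes the extra term vanish. The key observation is that the constant function $\1$ lies in the kernel of $\AK_{\pi_0}$: indeed $Q^{\pi_0}_{\1}=\Delta^{-1}\1=\tfrac{1}{1-\gamma}\1$ is constant, hence $A^{\pi_0}_{\1}=0$ and $\transp{\1}\AK_{\pi_0}\1=0$; positive semidefiniteness then forces $\AK_{\pi_0}\1=0$. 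Thus $\1$ is a $0$-eigenvector of $\drho^{-1}\AK_{\pi_0}$, and one may choose the top $d$ eigenvectors to be $L^2(\rho)$-orthogonal to $\1$ (if some top eigenvectors have eigenvalue $0$, break ties so as to avoid the direction $\1$). Using $\transp{\phi}\drho\phi=\Id$, the $L^2(\rho)$-projection of $\1$ onto the span is $\phi_{\mathrm{cst}}=\phi\transp{\phi}\drho\1$, which vanishes when $\transp{\phi}\drho\1=0$; the extra term then vanishes as well.

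The main obstacle is arguing the scattered case admits the same optimum as the other two, because the objective is no longer a pure trace. This is resolved by the kernel observation above, which decouples the two contributions in \eqref{eq:scatteredgain} and shows that they attain their individual maxima at the same choice of features; summing, the full gain is maximized there.
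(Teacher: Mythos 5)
Your proposal is correct and follows essentially the same route as the paper: a $\drho^{1/2}$ change of basis to reduce to a Euclidean trace maximization (the paper invokes the Poincaré separation theorem where you invoke the equivalent Ky Fan principle), followed by the observation that $\AK_{\pi_0}\1=0$ so the top eigendirections are $L^2(\rho)$-orthogonal to the constants and the extra nonpositive term in the scattered model vanishes at the same optimizer. Your explicit tie-breaking remark for a possible zero top eigenvalue is a minor refinement the paper leaves implicit, but the argument is otherwise identical.
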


We now turn to a more explicit characterization of these eigenvectors 
in deterministic environments. The results depend on the
decay factor $\gamma$. We first consider the two extreme cases $\gamma=1$
and $\gamma=0$, as they lead to the simplest expressions.

Remember that $\AK_{\pi_0}$ vanishes for a constant reward (the advantage
function is $0$), so we only have to compute it for rewards orthogonal to
the constants, $r\in L^2_0(\rho)$. For such rewards, $Q$-functions and
advantage functions are defined even for $\gamma=1$, and the Laplacian
operator is invertible up to $\gamma=1$.

\begin{thm}[ (Optimal features for $\gamma=1$ in a
deterministic environment)]
\label{thm:lap}
Assume the environment is deterministic.
For $\gamma=1$ and $r\in L_0^2(\rho)$, the advantage kernel is given by
\begin{equation}
\transp{r}\AK_{\pi_0}r
=\langle r, (\Delta^{-1}+(\Delta^{-1})^\ast-\Id)r\rangle_{L^2(\rho)}
\end{equation}
where $\Delta\deq \Id-P_{\pi_0}$ is the Laplacian operator of $\pi_0$,
and where $(\Delta^{-1})^\ast=\drho^{-1}\transp{(\Delta^{-1})}\drho$ is the adjoint of $\Delta^{-1}$ acting on
$L^2_0(\rho)$.

Consequently, for $\gamma=1$ in a deterministic environment, the optimal features are the largest
eigenfunctions of $\Delta^{-1}+(\Delta^{-1})^\ast$.
\end{thm}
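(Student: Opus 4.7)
The plan is to start from the explicit formula $\AK_{\pi_0} = \transp{(\Delta^{-1})}(\drho - \transp{\pi_0}\drho_S\pi_0)\Delta^{-1}$ derived immediately before the statement, set $Q := Q^{\pi_0}_r = \Delta^{-1}r$, and reorganise the resulting quadratic form in $r$ using the Bellman identity $r = Q - P_{\pi_0}Q$ (valid at $\gamma=1$). Directly from that formula one has $\transp{r}\AK_{\pi_0}r = \|Q\|^2_{L^2(\rho)} - \|\pi_0 Q\|^2_{L^2(\rho_S)}$, i.e.\ the $Q$-function norm minus the value-function norm, matching the definition of the advantage.

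The key step---and the only place where the determinism assumption actually enters---is to rewrite the value-function norm on states as a norm on state-actions. In a deterministic environment the next state is a function $f(s,a)$, so $P_{\pi_0}Q(s,a) = V^{\pi_0}_r(f(s,a))$; combining this with the fact that the one-step $P$-pushforward of $\rho$ equals $\rho_S$ (because $\rho$ is $P_{\pi_0}$-invariant and its state marginal is $\rho_S$) gives $\|P_{\pi_0}Q\|^2_{L^2(\rho)} = \sum_{s,a}\rho(s,a)\,V^{\pi_0}_r(f(s,a))^2 = \|V^{\pi_0}_r\|^2_{L^2(\rho_S)}$. Equivalently, determinism is exactly the case in which the Jensen inequality hidden inside $P_{\pi_0}$ is tight. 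I expect this to be the main obstacle conceptually: in a stochastic environment the identity degrades to a strict inequality with an extra variance-across-next-states term, which is precisely why the symmetric closed form breaks down and Theorem~\ref{thm:gen} requires a more involved expression.

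Given that, $\transp{r}\AK_{\pi_0}r = \|Q\|^2_{L^2(\rho)} - \|P_{\pi_0}Q\|^2_{L^2(\rho)}$, and I would finish by plugging $Q = r + P_{\pi_0}Q$ to expand $\|Q\|^2 - \|P_{\pi_0}Q\|^2 = 2\langle r,Q\rangle_{L^2(\rho)} - \|r\|^2_{L^2(\rho)}$, substituting $Q = \Delta^{-1}r$, and symmetrising via $2\langle r,\Delta^{-1}r\rangle = \langle r,\Delta^{-1}r\rangle + \langle \Delta^{-1}r,r\rangle = \langle r,(\Delta^{-1}+(\Delta^{-1})^\ast)r\rangle$ (the defining property of the $L^2(\rho)$-adjoint applied to the real symmetric inner product). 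This yields the stated identity; the restriction to $L^2_0(\rho)$ is harmless because $\AK_{\pi_0}$ annihilates constant rewards, which is where $\Delta$ fails to be invertible.

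For the ``consequently'' clause I would invoke Corollary~\ref{cor:optfeatures}: the optimal features are the top eigenvectors of the self-adjoint operator $\drho^{-1}\AK_{\pi_0}$ on $L^2(\rho)$, which by the identity just derived acts as $\Delta^{-1}+(\Delta^{-1})^\ast - \Id$ on $L^2_0(\rho)$. Subtracting $\Id$ shifts every eigenvalue by $-1$ without changing eigenvectors or their ordering, so the top eigenfunctions coincide with those of $\Delta^{-1}+(\Delta^{-1})^\ast$ itself, as claimed.
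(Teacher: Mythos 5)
Your proposal is correct and follows essentially the same route as the paper: the paper factors the argument through Proposition~\ref{prop:Anorm}, which states $\norm{f}^2_A=\norm{f}^2_{L^2(\rho)}-\norm{P_{\pi_0}f}^2_{L^2(\rho)}$ for deterministic environments (proved exactly by your observation that determinism makes the Jensen step inside $P$ an equality, combined with invariance of $\rho$), and then applies it to $f=Q^{\pi_0}_r$ with the Bellman identity $\Delta Q^{\pi_0}_r=r$ and the same symmetrization via the $L^2(\rho)$-adjoint. Your handling of the $L^2_0(\rho)$ restriction and of the $-\Id$ shift in the ``consequently'' clause also matches the paper's.
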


Intuitively, the largest eigenfunctions of
$\Delta^{-1}+(\Delta^{-1})^\ast$ correspond to the lowest frequencies
(longest range variations) in the environment: for $\gamma=1$, we want to keep
information on the large-scale variations of the reward function.

For comparison, it has previously been suggested to use as features the smallest
eigenfunctions of $\Delta+\Delta^\ast$ \cite{wu2018laplacian}, or equivalently, the largest
eigenfunctions of $P_{\pi_0}+P_{\pi_0}^\ast$.
The largest
eigenfunctions of $P_{\pi_0}+P_{\pi_0}^\ast$ also convey a low-frequency
intuition,
but in a somewhat different way. Indeed, in general, symmetrizing does
not commute with taking the inverse: $\Delta^{-1}+(\Delta^{-1})^\ast\neq
(\Delta+\Delta^\ast)^{-1}$ in general. This equality can still happen,
for instance if $\Delta$ is reversible, but reversibility of $\Delta$ is
a very specific situation: for instance, this is never the case in
kinematic environments. \footnote{Indeed, reversibility implies that if a
transition $(s,a)\to (s',a')$ is possible, then so is the reverse
transition $(s',a')\to (s,a)$. This is not the case if speed is part of
the state: if $s=(x,v)$ then the next state has $x'\approx x+\delta t\, v$, so the transition
$(s',a')\to (s,a)$ would require $v'\approx -v$, namely, the ability to
fully reverse speed with a single action.

Note that here we deal with
reversibility in the language of Markov chain theory, not reversibility in the
language of physics: in a physicist's language, classical mechanics are
reversible (by changing $v$ to $-v$).}

Therefore, in general, the optimal features are \emph{not} the smallest
eigenfunctions of $\Delta+\Delta^\ast$.

\bigskip

In contrast, for small $\gamma$, what matters are the \emph{highest}
frequencies of the reward function, as we show now.

\begin{prop}[ (Optimal features for $\gamma=0$ in a deterministic
environment)]
\label{prop:gamma0}
Assume the environment is deterministic.
For $\gamma=0$ and $r\in L^2(\rho)$, the advantage kernel is given by
\begin{equation}
\transp{r}\AK_{\pi_0}r
=\langle r, (\Id-P_{\pi_0}^\ast P_{\pi_0})r\rangle_{L^2(\rho)}
\end{equation}
where $P_{\pi_0}^\ast=\drho^{-1}\transp{P_{\pi_0}}\,\drho$ is the adjoint
of $P_{\pi_0}$ acting on
$L^2(\rho)$.

Consequently, for $\gamma=0$ in a deterministic environment, the optimal features are the
\emph{smallest}
eigenfunctions of $P_{\pi_0}^\ast P_{\pi_0}$.
\end{prop}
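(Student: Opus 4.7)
The plan is to specialize the general advantage-kernel formula
\[ \AK_{\pi_0}=\transp{(\Delta^{-1})}\bigl(\drho-\transp{\pi_0}\drho_S\pi_0\bigr)\Delta^{-1} \]
to $\gamma=0$, and then rewrite the resulting quadratic form in terms of $P_{\pi_0}^\ast P_{\pi_0}$, crucially using determinism of the environment. At $\gamma=0$ the Laplacian collapses to $\Delta=\Id$, so $\Delta^{-1}=\Id$, and the formula reduces to $\AK_{\pi_0}=\drho-\transp{\pi_0}\drho_S\pi_0$. Expanding the quadratic form gives
\[ \transp{r}\AK_{\pi_0}r=\norm{r}^2_{L^2(\rho)}-\norm{\pi_0 r}^2_{L^2(\rho_S)}. \]
Equivalently, since at $\gamma=0$ one has $Q^{\pi_0}_r=r$ and $V^{\pi_0}_r=\pi_0 r$, this is simply the Pythagoras identity $\norm{A^{\pi_0}_r}^2_{L^2(\rho)}=\norm{Q^{\pi_0}_r}^2_{L^2(\rho)}-\norm{V^{\pi_0}_r}^2_{L^2(\rho_S)}$.

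The heart of the proof is the identity
\[ \norm{\pi_0 r}^2_{L^2(\rho_S)}=\norm{P_{\pi_0}r}^2_{L^2(\rho)}=\langle r,P_{\pi_0}^\ast P_{\pi_0}r\rangle_{L^2(\rho)}, \]
valid in a deterministic environment. I would write $(P_{\pi_0}r)(s,a)=\sum_{s'}P(s'\mid s,a)\,(\pi_0 r)(s')$. Determinism makes $P(\cdot\mid s,a)$ a Dirac at some $s'(s,a)$, hence $(P_{\pi_0}r)(s,a)=(\pi_0 r)(s'(s,a))$, so squaring commutes with the sum over $s'$. Combining with the invariance identity $\rho_S(s')=\sum_{s,a}\rho(s,a)\,P(s'\mid s,a)$, which expresses that $\rho_S$ is the marginal of the $P_{\pi_0}$-invariant distribution $\rho$, a direct rearrangement yields the first equality; the second is just the definition of the adjoint in $L^2(\rho)$. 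Subtracting from $\norm{r}^2_{L^2(\rho)}$ gives the advertised formula $\transp{r}\AK_{\pi_0}r=\langle r,(\Id-P_{\pi_0}^\ast P_{\pi_0})r\rangle_{L^2(\rho)}$.

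For the statement about optimal features, I would observe that the above identity says $\drho^{-1}\AK_{\pi_0}=\Id-P_{\pi_0}^\ast P_{\pi_0}$ as self-adjoint operators on $L^2(\rho)$. Its largest eigenfunctions therefore coincide with the smallest eigenfunctions of $P_{\pi_0}^\ast P_{\pi_0}$, and Corollary~\ref{cor:optfeatures} concludes.

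The main subtlety is the determinism hypothesis in the middle step. Without it, Jensen applied to $x\mapsto x^2$ against the probability measure $P(\cdot\mid s,a)$ gives only
\[ (P_{\pi_0}r)(s,a)^2=\Bigl(\sum_{s'}P(s'\mid s,a)(\pi_0 r)(s')\Bigr)^2\leq\sum_{s'}P(s'\mid s,a)(\pi_0 r)(s')^2, \]
which on $\rho$-averaging yields $\norm{P_{\pi_0}r}^2_{L^2(\rho)}\leq\norm{\pi_0 r}^2_{L^2(\rho_S)}$, with equality iff $P(\cdot\mid s,a)$ is a Dirac for $\rho$-a.e.\ $(s,a)$. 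So the identification of $\AK_{\pi_0}$ with an expression in $P_{\pi_0}^\ast P_{\pi_0}$ genuinely breaks in stochastic environments, which is exactly why the general statement in Theorem~\ref{thm:featurereturn} is formulated in terms of $\AK_{\pi_0}$ itself rather than $P_{\pi_0}^\ast P_{\pi_0}$.
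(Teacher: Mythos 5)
Your proposal is correct and is essentially the paper's own argument: the identity $\norm{\pi_0 r}^2_{L^2(\rho_S)}=\norm{P_{\pi_0}r}^2_{L^2(\rho)}$ that you prove from determinism plus the invariance of $\rho$ is exactly the content of the paper's Proposition~\ref{prop:Anorm} (first identity, via $Pg^{\cdot 2}=(Pg)^{\cdot 2}$ and $\bar\rho=\bar\rho\pi_0 P$), which the paper invokes after noting $Q^{\pi_0}_r=r$ at $\gamma=0$, whereas you reach the same quadratic form by specializing the explicit kernel formula $\AK_{\pi_0}=\transp{(\Delta^{-1})}(\drho-\transp{\pi_0}\drho_S\pi_0)\Delta^{-1}$ to $\Delta=\Id$. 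The concluding appeal to Corollary~\ref{cor:optfeatures} and your Jensen remark about why determinism is needed are both sound.
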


For $\gamma=0$, the problem is a bandit problem: given an initial state
$s_0$, just pick the action with the highest reward, after which the game
ends. This relies on knowing how to compare the value of $r$ on actions
at the same state, which is high-frequency information about $r$.
\footnote{This conclusion for $\gamma=0$ heavily depends on the fact that we have
defined rewards over \emph{state-actions} all along. If rewards only
depend on the state, there is no meaningful optimal behavior for
$\gamma=0$. One could study the $\gamma\to 0$ limit for reward models
depending only on the state, but this is beyond the scope of the present
work.}

\bigskip

Finally, we give the general expression for $0<\gamma<1$ in a
deterministic environment.

\begin{thm}[ (Optimal features for $0<\gamma<1$ in a deterministic
environment)]
\label{thm:gen}
Assume the environment is deterministic.
For $0<\gamma<1$, the advantage kernel is given by
\begin{align*}
\transp{r}\AK_{\pi_0}r=
\frac1{\gamma^2}\left\langle r, \left(\Delta^{-1}+(\Delta^{-1})^{\ast}-\Id-(1-\gamma^2)
(\Delta^{-1})^\ast \Delta^{-1}\right)r\right\rangle_{L^2(\rho)}
\end{align*}
where $\Delta\deq \Id-\gamma P_{\pi_0}$ is the Laplacian operator of $\pi_0$,
and where $(\Delta^{-1})^\ast=\drho^{-1}\transp{(\Delta^{-1})}\drho$ is the adjoint of $\Delta^{-1}$ acting on
$L^2(\rho)$.

Consequently, in a deterministic environment, the optimal features are
the largest eigenfunctions of 
$\Delta^{-1}+(\Delta^{-1})^{\ast}-(1-\gamma^2)
(\Delta^{-1})^\ast \Delta^{-1}$.
\end{thm}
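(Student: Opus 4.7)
The plan is to start from the explicit formula for the advantage kernel already derived just before the theorem, namely
\[
\AK_{\pi_0}=\transp{(\Delta^{-1})}\bigl(\drho-\transp{\pi_0}\drho_S\pi_0\bigr)\Delta^{-1},
\]
so that with $Q\deq Q^{\pi_0}_r=\Delta^{-1}r$ and $V\deq V^{\pi_0}_r=\pi_0 Q$ one has
\[
\transp{r}\AK_{\pi_0}r=\norm{Q}_{L^2(\rho)}^2-\norm{V}_{L^2(\rho_S)}^2.
\]
All the work is then to replace the state-indexed quantity $\norm{V}_{L^2(\rho_S)}^2$ by an expression involving only $Q$ and $r$ on state--actions, using the deterministic hypothesis and the Bellman equation.

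The pivotal step is the identity, valid \emph{only} in a deterministic environment, that
\[
\norm{P_{\pi_0} Q}_{L^2(\rho)}^2=\norm{V}_{L^2(\rho_S)}^2.
\]
Indeed, determinism means $P(s'|s,a)=\1_{s'=f(s,a)}$ for some map $f$, so $(P_{\pi_0}Q)(s,a)=V(f(s,a))$ pointwise, and by $\pi_0$-invariance of $\rho$ the push-forward of $\rho$ by $f$ equals $\rho_S$, yielding the identity. This is the main obstacle, and also the only place where determinism is used: in the stochastic case, Jensen's inequality produces a strict gap $\E[V(s')^2]>(\E V(s'))^2$ pointwise that cannot be absorbed into a formula in $\Delta^{-1}$ alone, which is exactly why Theorem~\ref{thm:featurereturn} states the general formula through $\AK_{\pi_0}$ rather than an expression in $\Delta$ only.

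Combining the two identities gives
\[
\transp{r}\AK_{\pi_0}r=\norm{Q}_{L^2(\rho)}^2-\norm{P_{\pi_0}Q}_{L^2(\rho)}^2,
\]
after which I would use the Bellman equation $\gamma P_{\pi_0}Q=Q-r$ to rewrite
\[
\gamma^2\norm{P_{\pi_0}Q}_{L^2(\rho)}^2=\norm{Q-r}_{L^2(\rho)}^2=\norm{Q}_{L^2(\rho)}^2-2\langle Q,r\rangle_{L^2(\rho)}+\norm{r}_{L^2(\rho)}^2,
\]
so that
\[
\transp{r}\AK_{\pi_0}r=\frac{1}{\gamma^2}\Bigl(-(1-\gamma^2)\norm{Q}_{L^2(\rho)}^2+2\langle Q,r\rangle_{L^2(\rho)}-\norm{r}_{L^2(\rho)}^2\Bigr).
\]

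To conclude, I substitute $Q=\Delta^{-1}r$ and convert each term into a quadratic form in $r$: $\norm{Q}_{L^2(\rho)}^2=\langle r,(\Delta^{-1})^\ast\Delta^{-1}r\rangle_{L^2(\rho)}$, while for the cross term I use that $\langle \Delta^{-1}r,r\rangle_{L^2(\rho)}$ is real, hence equals its symmetric part $\tfrac12\langle r,(\Delta^{-1}+(\Delta^{-1})^\ast)r\rangle_{L^2(\rho)}$. Collecting terms yields
\[
\transp{r}\AK_{\pi_0}r=\frac{1}{\gamma^2}\bigl\langle r,\,\Delta^{-1}+(\Delta^{-1})^\ast-\Id-(1-\gamma^2)(\Delta^{-1})^\ast\Delta^{-1}\,r\bigr\rangle_{L^2(\rho)},
\]
which is exactly the stated formula. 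The eigenfunction consequence is then immediate: maximizing $\Tr(\transp{\phi}\AK_{\pi_0}\phi)$ under $L^2(\rho)$-orthonormality of $\phi$ is the same as finding the top eigenfunctions of $\drho^{-1}\AK_{\pi_0}$, and the $1/\gamma^2$ prefactor and the additive $-\Id$ do not affect the order of eigenvalues since they shift them by constants, leaving only $\Delta^{-1}+(\Delta^{-1})^\ast-(1-\gamma^2)(\Delta^{-1})^\ast\Delta^{-1}$ relevant for the ranking.
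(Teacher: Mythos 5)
Your proposal is correct and follows essentially the same route as the paper: your identity $\norm{V}^2_{L^2(\rho_S)}=\norm{P_{\pi_0}Q}^2_{L^2(\rho)}$ in a deterministic environment is exactly the content of the paper's Proposition~\ref{prop:Anorm} (there proved via $Pg^{\cdot 2}=(Pg)^{\cdot 2}$ and invariance of $\rho$, here via the transition map $f$ and the push-forward argument, which are the same computation), and the remaining steps --- substituting the Bellman equation $\gamma P_{\pi_0}Q=Q-r$, symmetrizing the cross term, and discarding the $-\Id$ shift and $1/\gamma^2$ prefactor for the eigenvalue ranking --- match the paper's proof of Theorem~\ref{thm:gen} line for line.
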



\option{
\section{The Advantage Kernel and Natural Policy Gradient}

\TODO{}

Finally, the advantage kernel gives the performance improvement along natural policy gradient
trajectories (Theorem \TODO{}): if $(\pi_t)_{t\geq 0}$ is the family of
policies learned by natural policy gradient on a reward $r$, then
\begin{equation}
\frac{\d}{\d t} \E_{\rho_t} [r]=\transp{r}\AK_{\pi_t} r
\end{equation}
where $\rho_t$ is the invariant distribution of $\pi_t$.

\TODO{} move Theorem~\ref{thm:return} here?
}


\section{Proofs}
\label{sec:proofs}

%
%

\option{
\paragraph{BC regularization with large $T$ is equivalent to natural
policy gradient.}

Nat grad = Fisher.policy grad by def.  P(theta).df/dtheta equiv to
penalty P at first order. Only first-order behavior of objective matters.
First-order objectives coincide by pol grad thm (note: additional
}

\begin{lem}
\label{lem:KLboltz}
Abbreviate $K(\pi,s)\deq T \KL{\pi(s)}{\pi_0(s)}$ for the regularization
term in Definition~\ref{def:regul}.

Then
\begin{equation}
\Bol(f)(s,a)=\piref(s,a)\left(1+\frac1T f(s,a)-\frac1T\bar f(s)\right)+O(1/T^2)
\end{equation}
and
\begin{equation}
K(\Bol(f),s)
=\frac1{2 T} \E_{a\sim \piref(s)} \left(
f(s,a)-\bar f(s)\right)^2+O(1/T^2)
\end{equation}
and consequently
\begin{equation}
\E_{s\sim \rho} \left[
K(\Bol(f),s)
\right]
=\frac1{2T} \norm{f}^2_A+O(1/T^2)
\end{equation}
\end{lem}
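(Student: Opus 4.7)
The plan is to expand the Boltzmann policy $\Bol(f)(s,a) = \piref(s,a)\exp(f(s,a)/T)/Z(s)$ with $Z(s) \deq \E_{a'\sim \piref(s)}\exp(f(s,a')/T)$ as a power series in $\eps\deq 1/T$, then substitute into the KL divergence and use the standard identity $\ln(\Bol(f)/\piref) = f/T - \ln Z$.

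First I would expand the numerator and denominator separately. The numerator is $\piref(s,a)(1 + \eps f(s,a) + \tfrac{\eps^2}{2} f(s,a)^2 + O(\eps^3))$. The denominator $Z(s)$, averaging over $a'\sim \piref(s)$, gives $1 + \eps \bar f(s) + \tfrac{\eps^2}{2}\E_{a'\sim \piref(s)}[f(s,a')^2] + O(\eps^3)$. Taking the ratio and keeping only terms linear in $\eps$ yields the first claim
\begin{equation*}
\Bol(f)(s,a) = \piref(s,a)\bigl(1 + \eps(f(s,a) - \bar f(s))\bigr) + O(\eps^2),
\end{equation*}
which is just the standard softmax sensitivity formula.

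For the second claim, I would rewrite $K(\Bol(f),s) = \E_{a\sim \Bol(f)(s)}[f(s,a)] - T\ln Z(s)$, which follows directly from the explicit form of $\Bol(f)$. For $T\ln Z(s)$, apply $\ln(1+x) = x - x^2/2 + O(x^3)$ to the expansion of $Z(s)$ to get $T\ln Z(s) = \bar f(s) + \tfrac{\eps}{2}\Var_{a\sim \piref(s)}(f(s,a)) + O(\eps^2)$. For $\E_{a\sim \Bol(f)(s)}[f(s,a)]$, apply the first-order expansion of $\Bol(f)$ already derived, getting $\bar f(s) + \eps\Var_{a\sim \piref(s)}(f(s,a)) + O(\eps^2)$. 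Subtracting, the $\bar f(s)$ terms cancel and one variance gets halved, yielding $\tfrac{1}{2T}\E_{a\sim\piref(s)}(f(s,a) - \bar f(s))^2 + O(1/T^2)$, as claimed.

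The third claim is then immediate: taking $\E_{s\sim \rho_S}$ of the pointwise formula, the right-hand side becomes $\tfrac{1}{2T}\E_{(s,a)\sim \rho}(f(s,a) - \bar f(s))^2 = \tfrac{1}{2T}\norm{f}^2_A$ by definition of the advantage norm. The only mild subtlety is tracking the $O(\eps^2)$ remainders carefully to make sure nothing larger is hiding in the ratio of power series; this is just a matter of checking that $f$, hence $\bar f$, $\Var$, and $\overline{f^2}$, are bounded (which holds since $S\times A$ is finite) so that the $O$ constants are uniform in $s$ and can be pulled through the expectation.
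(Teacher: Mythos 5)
Your proposal is correct and is exactly the approach the paper intends: the paper's proof simply says ``These follow from direct Taylor expansions,'' and your computation supplies those expansions (softmax sensitivity for the first claim, the $\ln Z$ cancellation for the second, and averaging over $s\sim\rho_S$ for the third), with the right attention to uniformity of the $O(1/T^2)$ remainders on a finite state-action space.
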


\begin{dem}
These follow from direct Taylor expansions.
\end{dem}

\begin{dem}[ of Theorem~\ref{thm:return}]
We want to estimate the regularized return $G^\pi_r$ of $\pi=\Bol(\hat Q)$. By
definition, it is the sum of the ordinary return and a penalty term,
\begin{equation}
G^{\pi}_r= \E_{s_0\sim \rho} [V^{\pi}_r(s_0)] - \E_{s_0\sim\rho}
\,\E\left[
\sum_{t \geq 0} \gamma^t K(\pi,s_t)\mid s_0,\pi
\right]
\end{equation}
where $K(\pi,s)\deq T\KL{\pi(s)}{\pi_0(s)}$ is the penalty term as defined in Lemma~\ref{lem:KLboltz}.

We first estimate the value function $V^{\pi}_r$, then turn to the
penalty term.

By definition of Boltzmann policies, $\pi=\Bol(\hat Q)$ is $O(1/T)$-close to
$\pi_0$.

For $\pi$ close to $\pi_0$, the policy gradient theorem provides the
expression of the derivative of $V^{\pi}$ with respect to $\pi$. Writing
the policy gradient theorem as a Taylor expansion around $\pi\approx
\pi_0$, we obtain
\begin{equation}
V^\pi(s_0)-V^{\pi_0}(s_0)=\E_{(s_t,a_t)} \left[
\sum_{t\geq 0} \gamma^t
A^{\pi_0}(s_t,a_t)
\left(\ln \pi(a_t|s_t)-\ln \pi_0(a_t|s_t)\right)
\mid s_0,\pi_0
\right]+
O((\pi-\pi_0)^2)
\end{equation}
where $A^{\pi_0}$ is the advantage function of policy $\pi_0$.

Let us average this over $s_0\sim \rho$. Since $\rho$ is the invariant
distribution of $\pi_0$, each $s_t$ above is also distributed according
to $\rho$, and therefore all values of $t$ make the same contribution:
\begin{equation}
\E_{s_0\sim \rho} [V^\pi(s_0)-V^{\pi_0}(s_0)]
=\frac{1}{1-\gamma} \E_{s\sim \rho,a\sim \pi_0(s)}\left[
A^{\pi_0}(s,a)\left(\ln \pi(a|s)-\ln \pi_0(a|s)\right)
\right]
+
O((\pi-\pi_0)^2)
\end{equation}

By Lemma~\ref{lem:KLboltz}, for $\pi=\Bol(\hat Q)$ we have
$(\pi-\pi_0)^2=O(1/T^2)$ and moreover
\begin{equation}
\ln \pi(s,a)=\ln \pi_0(s,a)+ \frac1T(\hat Q(s,a)-\bar {\hat Q}(s))+O(1/T^2)
\end{equation}
and therefore
\begin{multline}
\E_{s_0\sim \rho} [V^\pi(s_0)-V^{\pi_0}(s_0)]
\\=\frac{1}{T(1-\gamma)} \E_{s\sim \rho,a\sim \pi_0(s)}\left[
A^{\pi_0}(s,a)\left(\hat Q(s,a)-\bar{\hat Q}(s)\right)\right]
+O(1/T^2)
\\=\frac{1}{T(1-\gamma)}\langle Q_r^{\pi_0},\hat Q\rangle_A + O(1/T^2)
\end{multline}
by definition of the advantage norm.

Let us now turn to the regularization term in $G^{\pi}_r$: we want to estimate
\begin{equation}
\E_{s_0\sim \rho} \,\E\left[
\sum_{t \geq 0} \gamma^t K(\pi,s_t)\mid s_0,\pi
\right].
\end{equation}

Let $p_{\pi,t}$ be the distribution of $s_t$ under policy $\pi$ and
$s_0\sim \rho_0$. Since $\pi=\Bol(f)$ is $O(1/T)$-close to $\pi_0$, the
associated 
transition matrices are also close: $P_{\pi}=P_{\pi_0}+O(1/T)$, and
therefore $P_\pi^t=P_{\pi_0}^t+O(1/T)$.
\footnote{The constant in $O$ is
not uniform in $t$. It grows like $t$, since
$P_1^t-P_2^t=\sum_{i=0}^{t-1} P_1^i
(P_1-P_2) P_2^{t-1-i}$, and $P_1$ and $P_2$ are stochastic matrices so are
non-expanding in sup norm, so each term is $O(P_1-P_2)$. So we have $P_1^t-P_2^t= O(t(P_1-P_2))$ with
uniform constants. The $t$ factor is absorbed by $\gamma^t$ in the
cumulated return.}
So in turn, $p_{\pi,t}=p_{\pi_0,t}+O(1/T)$.

By Lemma~\ref{lem:KLboltz}, $K$ itself is $O(1/T)$. So when computing the
expectation of $K$ under $s_t\sim p_{\pi,t}$, we can replace $\pi$ with
$\pi_0$ up to an $O(1/T^2)$ error:
\begin{align}
\E_{s_t\sim p_{\pi,t}} [K(\pi,s_t)]&=
\E_{s_t\sim p_{\pi_0,t}}
[K(\pi,s_t)]+O(\norm{K}\norm{p_{\pi,t}-p_{\pi_0,t}})
\\&=
\E_{s_t\sim p_{\pi_0,t}} [K(\pi,s_t)]+O(1/T^2).
\end{align}

Since $s_0\sim \rho$ and $\rho$ is an invariant distribution of $\pi_0$,
we have $p_{\pi_0,t}=\rho$. Therefore,
\begin{equation}
\sum_{t\geq 0} \gamma^t \E_{s_t\sim p_{\pi_0,t}}
[K(\pi,s_t)]=\frac{1}{1-\gamma}\E_{s\sim \rho} [K(\pi,s)]
\end{equation}
and since $\pi=\Bol(\hat Q)$, 
this is equal to 
$\frac1{2T(1-\gamma)} \norm{\hat Q}^2_A+O(1/T^2)$ by Lemma~\ref{lem:KLboltz}.

By collecting the return term and the penalty term, we have
\begin{align}
G^{\pi}_r-\E_{s_0\sim \rho}[V^{\pi_0}(s_0)]&=
\frac{1}{T(1-\gamma)}\langle Q_r^{\pi_0},\hat Q\rangle_A 
- \frac1{2T(1-\gamma)} \norm{\hat Q}^2_A+O(1/T^2)
\\&=
\frac{1}{2T(1-\gamma)}\left(
\norm{Q_r^{\pi_0}}^2_A-\norm{Q_r^{\pi_0}-\hat Q}^2_A
\right)
+O(1/T^2)
\end{align}
which ends the proof since $G^{\pi_0}_r=\E_{s_0\sim
\rho}[V^{\pi_0}(s_0)]$.
\end{dem}

\bigskip

For the proof of Theorem~\ref{thm:featurereturn} we need a preliminary
result on the reward models.

\begin{prop}[ (Second moment of the reward in the models)]
\label{prop:Err}
For the random reward models of Section~\ref{sec:rmodels}, the second
moment $\E[r\transp{r}]$ satisfies:
\begin{itemize}
\item For the random Gaussian reward and random goal-reaching reward,
\begin{equation}
\E [r\transp{r}]=\drho^{-1}.
\end{equation}
\item For the scattered random reward model with intensity $\kappa$, mean $\mu$
and variance $\sigma^2$,
\begin{equation}
\E [r\transp{r}]=
\kappa (\mu^2+\sigma^2)\drho^{-1}+
(\kappa\mu)^2 \1\transp{\1}
\end{equation}
where $\1$ is the constant vector with components equal to $1$.
\end{itemize}
\end{prop}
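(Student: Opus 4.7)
The plan is to verify each of the three formulas by a direct computation of $\E[r(s,a)\, r(s',a')]$ as a function of the test pair $((s,a),(s',a'))$, and then read off the matrix identity.

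For the Gaussian model, I would just note that the density $\propto \exp(-\tfrac{1}{2}\transp{r}\drho r)$ is that of a centered Gaussian vector with covariance matrix $\drho^{-1}$, so by definition $\E[r\transp{r}]=\drho^{-1}$ and there is nothing more to do.

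For the goal-reaching model, I would compute
\begin{equation}
\E[r(s,a)\,r(s',a')]=\E_{(s^\star,a^\star)\sim \rho}\!\left[\frac{1}{\rho(s^\star,a^\star)^2}\,\1_{(s,a)=(s^\star,a^\star)}\,\1_{(s',a')=(s^\star,a^\star)}\right].
\end{equation}
The product of indicators vanishes unless $(s,a)=(s',a')=(s^\star,a^\star)$. Conditioning on $(s,a)=(s',a')$, the expectation over $(s^\star,a^\star)\sim \rho$ yields $\rho(s,a)\cdot \rho(s,a)^{-2}=\rho(s,a)^{-1}$. Hence $\E[r\transp{r}]$ is diagonal with entries $\rho(s,a)^{-1}$, i.e.\ $\drho^{-1}$.

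The only step with some content is the scattered case, which I would handle by writing
\begin{equation}
r(s,a)r(s',a')=\sum_{i,j=1}^{N}\frac{w_i w_j}{\rho(s_i,a_i)\rho(s_j,a_j)}\,\1_{(s,a)=(s_i,a_i)}\,\1_{(s',a')=(s_j,a_j)}
\end{equation}
and splitting into the diagonal terms $i=j$ and the off-diagonal terms $i\neq j$. Conditioning on $N$, the diagonal sum has $N$ i.i.d.\ summands, each with expectation $\E[w^2]\,\E_{(s_1,a_1)\sim \rho}[\rho(s_1,a_1)^{-2}\1_{(s,a)=(s_1,a_1)}\1_{(s',a')=(s_1,a_1)}]=(\mu^2+\sigma^2)\rho(s,a)^{-1}\1_{(s,a)=(s',a')}$, and averaging $N\sim\mathrm{Poisson}(\kappa)$ gives a factor $\E[N]=\kappa$. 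The off-diagonal sum has $N(N-1)$ terms that factor by independence of the $(s_i,a_i,w_i)$, each contributing $\mu^2\cdot\E_{(s_1,a_1)\sim \rho}[\rho(s_1,a_1)^{-1}\1_{(s,a)=(s_1,a_1)}]\cdot \E_{(s_2,a_2)\sim \rho}[\rho(s_2,a_2)^{-1}\1_{(s',a')=(s_2,a_2)}]=\mu^2$; averaging with the Poisson identity $\E[N(N-1)]=\kappa^2$ gives a contribution $(\kappa\mu)^2$, independent of $(s,a),(s',a')$. Adding the two contributions and writing the constant term as the rank-one matrix $(\kappa\mu)^2\,\1\transp{\1}$ yields the claimed formula. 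The only subtlety to flag is keeping track of the two Poisson moments $\E[N]=\kappa$ and $\E[N(N-1)]=\kappa^2$ (equivalently invoking Campbell/Mecke for the Poisson process), and checking that the $1/\rho(s_i,a_i)$ weights exactly cancel the sampling density of $(s_i,a_i)\sim \rho$ to produce clean expressions independent of the location.
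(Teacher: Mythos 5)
Your proposal is correct and follows essentially the same route as the paper's proof: the Gaussian case is read off from the density, the goal-reaching case is a direct expectation over $(s^\star,a^\star)\sim\rho$ where the $1/\rho$ weights cancel the sampling density, and the scattered case is split into the $i=j$ and $i\neq j$ terms using $\E[N]=\kappa$ and $\E[N(N-1)]=\kappa^2$. The only difference is cosmetic — you work entrywise with $\E[r(s,a)r(s',a')]$ while the paper writes the same computation in matrix notation — so there is nothing further to add.
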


\begin{dem}[ of Proposition~\ref{prop:Err}]
For the random Gaussian reward, the model is $\propto \exp
(-\norm{r}^2_{L^2(\rho)}/2)=\exp(-\transp{r}\drho r/2)$ so the covariance
matrix is $\drho^{-1}$ by construction.

For random goal-reaching, we first sample a state-action
$(s^\star,a^\star)\sim \rho$
then set the reward to $r=\1_{(s^\star,a^\star)}/\rho(s^\star,a^\star)$. Therefore, the expectation
of $r\transp{r}$ is
\begin{equation}
\E[r\transp{r}]=\sum_{(s^\star,a^\star)}
\rho(s^\star,a^\star)\frac{\1_{(s^\star,a^\star)}\transp{\1_{(s^\star,a^\star)}}}{\rho(s^\star,a^\star)^2}=\sum_{(s^\star,a^\star)}\frac{1}{\rho(s^\star,a^\star)}\1_{(s^\star,a^\star)}\transp{\1_{(s^\star,a^\star)}}=\drho^{-1}.
\end{equation}

Scattered random reward require more computation. We first sample an
integer $N\sim \mathrm{Poisson}(\kappa)$, then sample $N$ state-actions
$(s_i,a_i)\sim\rho$ and weights $w_i\sim p_w$, $i=1,\ldots,N$, then set
the reward to
\begin{equation}
r=\sum_{i=1}^N \frac{w_i}{\rho(s_i,a_i)} \1_{(s_i,a_i)}.
\end{equation}
Therefore,
\begin{equation}
\E[r\transp{r}]=\E\left[
\sum_{i=1}^N \frac{w_i^2}{\rho(s_i,a_i)^2}
\1_{(s_i,a_i)}\transp{\1_{(s_i,a_i)}}
+\sum_{i=1}^N\frac{w_i}{\rho(s_i,a_i)} \1_{(s_i,a_i)}
\sum_{j\neq i} \frac{w_j}{\rho(s_j,a_j)} \transp{\1_{(s_j,a_j)}}
\right]
\end{equation}

Let us consider the first term. For each $i$ the expectation is the same,
and moreover the sampling of the weights $w_i$ is independent from the
sampling of the state-actions, so
\begin{align}
\E
\left[
\sum_{i=1}^N \frac{w_i^2}{\rho(s_i,a_i)^2}
\1_{(s_i,a_i)}\transp{\1_{(s_i,a_i)}}
 \right]
&=(\E[N])\,(\E[w_1^2])\,\E\left[
\frac{1}{\rho(s_1,a_1)^2}
\1_{(s_1,a_1)}\transp{\1_{(s_1,a_1)}}
\right]
\\&=\kappa (\mu^2+\sigma^2) \drho^{-1}
\end{align}
because the expectation of $N\sim \mathrm{Poisson}(\kappa)$ is $\kappa$, and
because $\E\left[
\frac{1}{\rho(s_1,a_1)^2}
\1_{(s_1,a_1)}\transp{\1_{(s_1,a_1)}}
\right]$ is computed exactly as in the random goal-reaching reward case.

For the second term, each $j$ term is independent from the $i$ term. For
each $j$ term, we have
\begin{align}
\E\left[
\frac{w_j}{\rho(s_j,a_j)} \transp{\1_{(s_j,a_j)}}
\right]
&=\E[w_j]\,\E\left[\frac{1}{\rho(s_j,a_j)} \transp{\1_{(s_j,a_j)}}\right]
\\&=\mu \sum_{(s,a)} \rho(s,a) \frac{1}{\rho(s,a)} \transp{\1_{(s,a)}}
\\&=\mu\transp{\1}
\end{align}
since the probability to sample $(s,a)$ is $\rho(s,a)$, and since the
weights $w_j$ are sampled independently from the state-actions.

The same computation applies to the $i$ term, which is independent from
the $j$ term. So conditionally to $N$, each pair $(i,j)$ contributes
$\mu^2 \1\transp{\1}$ to the expectation.
Since there are $N(N-1)$ pairs $(i,j)$, we have
\begin{equation}
\E\left[
\sum_{i=1}^N\frac{w_i}{\rho(s_i,a_i)} \1_{(s_i,a_i)}
\sum_{j\neq i} \frac{w_j}{\rho(s_j,a_j)} \transp{\1_{(s_j,a_j)}}
\right]=\E[N(N-1)] \,\mu^2 \1\transp{\1}.
\end{equation}
Finally, for a Poisson process with parameter $\kappa$, we have
$\E[N(N-1)]=\E[N^2]-\E[N]=\Var[N]+(\E[N])^2-\E[N]=\kappa^2$ since the
expectation and variance of the Poisson distribution are both $\kappa$.
This ends the proof.
\end{dem}

The following lemma expresses that constant rewards have zero advantages.

\begin{lem}
\label{lem:A1}
For any policy $\pi$, one has $\AK_\pi\1=0$ where $\1$ is the constant
vector with components $1$.
\end{lem}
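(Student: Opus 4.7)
The plan is to exploit the positive semi-definiteness of $\AK_\pi$ together with the fact that a constant reward produces a trivial advantage function. First I would compute the $Q$-function of $\pi$ for the constant reward $r=\1$: by the Bellman equation, $Q^\pi_\1 = \sum_{t\geq 0}\gamma^t \1 = \frac{1}{1-\gamma}\1$, which is itself a constant vector. Consequently $V^\pi_\1(s)=\E_{a\sim\pi(s)}[Q^\pi_\1(s,a)]=\frac{1}{1-\gamma}$ is the same constant, and hence $A^\pi_\1(s,a)=Q^\pi_\1(s,a)-V^\pi_\1(s)=0$ identically.

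By Definition~\ref{def:AK}, this gives $\transp{\1}\AK_\pi\1 = \norm{A^\pi_\1}^2_{L^2(\rho)}=0$. The final step is to upgrade ``the quadratic form vanishes on $\1$'' to ``the vector $\AK_\pi\1$ itself vanishes''. This uses that $\AK_\pi$ is symmetric positive semi-definite: writing $\AK_\pi = \transp{M}M$ for some matrix $M$ (e.g., $M$ can be taken to be the linear map $r\mapsto A^{\pi}_r$ composed with multiplication by $\drho^{1/2}$, since $\norm{A^\pi_r}^2_{L^2(\rho)} = \norm{Mr}^2$), one gets $0=\transp{\1}\AK_\pi\1=\norm{M\1}^2$, so $M\1=0$ and therefore $\AK_\pi\1=\transp{M}M\1=0$.

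The only step worth double-checking is the identity $Q^\pi_\1 = \frac{1}{1-\gamma}\1$, which is immediate from $\Delta_\pi Q^\pi_\1 = \1$ (with $\Delta_\pi \deq \Id-\gamma P_\pi$) since $P_\pi$ is a stochastic matrix and hence $P_\pi\1=\1$. There is no real obstacle here: the lemma is essentially a restatement of the elementary fact that advantages are invariant under adding a constant to the reward, packaged through the positive-semidefiniteness of $\AK_\pi$.
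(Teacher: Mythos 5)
Your proof is correct and rests on the same key observation as the paper's: the constant reward has constant $Q$-function, hence identically zero advantage. The only (immaterial) difference is the last step — the paper passes to the polar form $\transp{r_1}\AK_\pi\1=\langle A^\pi_{r_1},A^\pi_{\1}\rangle_{L^2(\rho)}=0$ for all $r_1$, while you use the positive semi-definite factorization $\AK_\pi=\transp{M}M$; both are valid ways to upgrade the vanishing of the quadratic form at $\1$ to $\AK_\pi\1=0$.
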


\begin{proof}
By definition, for any reward function $r$, one has
\begin{equation}
\transp{r}\AK_\pi r= \norm{Q^\pi_r}^2_A= \E_{(s,a)\sim \rho}\left[
(Q^\pi_r(s,a)-\E_{a'\sim \pi_0(s)} Q^\pi_r(s,a'))^2\right]
\end{equation}
and the polar form of this quadratic form is the correlation between
advantages,
\begin{equation}
\transp{r_1}\AK_\pi r_2= \E_{(s,a)\sim \rho}[
(Q^\pi_{r_1}(s,a)-\E_{a'\sim \pi_0(s)} Q^\pi_{r_1}(s,a'))
(Q^\pi_{r_2}(s,a)-\E_{a'\sim \pi_0(s)} Q^\pi_{r_2}(s,a'))
].
\end{equation}

When $r_2=\1$, one has $Q^\pi_{r_2}=\frac{1}{1-\gamma}\1$ for any policy
$\pi$. Therefore, $Q^\pi_{r_2}(s,a)-\E_{a'\sim \pi_0(s)}
Q^\pi_{r_2}(s,a')=0$ and $\transp{r_1}\AK_\pi r_2=0$ for any $r_1$, so
that $\AK_\pi r_2=0$.
\end{proof}

\begin{dem}[ of Theorem~\ref{thm:featurereturn}]
By Theorem~\ref{thm:return}, for a reward $r$, the expected return using
policy $\Bol(\hat Q)$ is
\begin{equation}
G^{\pi}_r=G^{\pi_0}_r+\frac{1}{2T(1-\gamma)}
\left(\norm{Q^{\pi_0}_r}^2_A-\norm{\hat Q-Q^{\pi_0}_r}^2_A\right) +O(1/T^2).
\end{equation}
and we want to compute the expectation of this when the reward follows
one of the models in Section~\ref{sec:rmodels}, and $\hat Q$ is the
$Q$-function estimated by regularized successor features.

To obtain
Theorem~\ref{thm:featurereturn}, we must compute the expectation over $r$
of the $\frac{1}{2T(1-\gamma)}$ term.

Let $\hat r$ be the $L^2(\rho)$-orthogonal projection of the reward $r$
onto the features $\phi$. By Definition~\ref{def:rsf}, successor features estimate
$\hat Q$ as the $Q$-function of the estimated reward $\hat r$ for policy
$\pi_0$, namely,
\begin{equation}
\hat Q=Q^{\pi_0}_{\hat r}
\end{equation}
and therefore
\begin{equation}
\hat Q-Q^{\pi_0}_r=Q^{\pi_0}_{\hat r}-Q^{\pi_0}_r=Q^{\pi_0}_{\hat r-r}
\end{equation}
since $Q$-functions are linear in the reward for a fixed policy.

By definition of the advantage kernel $\AK$, the advantage norm of the
$Q$-function of reward $r$ is $\transp{r}\AK r$, so
\begin{equation}
\norm{Q^{\pi_0}_r}^2_A=\transp{r} \AK_{\pi_0}r=\Tr
(\AK_{\pi_0}r\transp{r})
\end{equation}
and
\begin{equation}
\norm{\hat Q-Q^{\pi_0}_r}^2_A=\norm{Q^{\pi_0}_{\hat r-r}}^2_A=
\transp{(r-\hat r)}\AK_{\pi_0} (r-\hat r)
=\Tr(\AK_{\pi_0}(r-\hat r)\transp{(r-\hat r)}).
\end{equation}
By definition of $\hat r$, we have $r-\hat r=(\Id-\Pi)r$ where $\Pi$ is
the $L^2(\rho)$-orthogonal projector onto the features. So
\begin{multline}
\label{eq:return_Err}
\E_r\left[\norm{Q^{\pi_0}_r}^2_A-\norm{\hat Q-Q^{\pi_0}_r}^2_A\right]=
\Tr(\AK_{\pi_0} \E_r [r\transp{r}])
\\-\Tr\left(\AK_{\pi_0} (\Id-\Pi)\E_r
[r\transp{r}] \transp{(\Id-\Pi)}\right)
\end{multline}

Since the
features $\phi$ are $L^2(\rho)$-orthonormal, this projector is given by
$\Pi r=\phi w$ where $w=\E_{(s,a)\sim \rho}
[r(s,a)\phi(s,a)]=\transp{\phi}\drho r$ is the linear regression vector
of $r$ onto the features. Therefore the projector $\Pi$ satisfies $\Pi
r=\phi\transp{\phi}\drho r$ so
\begin{equation}
\Pi=\phi\transp{\phi}\drho.
\end{equation}

Let us first consider the case of random Gaussian rewards and random
goal-reaching rewards. In both cases, by Proposition~\ref{prop:Err} we
have
\begin{equation}
\E[r\transp{r}]=\drho^{-1}
\end{equation}
so in that case
\begin{multline}
\E_r\left[\norm{Q^{\pi_0}_r}^2_A-\norm{\hat Q-Q^{\pi_0}_r}^2_A\right]=
\Tr(\AK_{\pi_0} \drho^{-1})
\\-\Tr\left(\AK_{\pi_0} (\Id-\Pi)\drho^{-1}\transp{(\Id-\Pi)}\right).
\end{multline}
Since $\Id-\Pi$ is an $L^2(\rho)$-orthogonal projector, one has
\begin{equation}
(\Id-\Pi)\drho^{-1}\transp{(\Id-\Pi)}=(\Id-\Pi)\drho^{-1}
\end{equation}
which one can also check by a direction computation using
$\Pi=\phi\transp{\phi}\drho$ and $L^2(\rho)$-orthonormality of features
$(\transp{\phi}\drho \phi=\Id$). Therefore,
\begin{align}
\E_r\left[\norm{Q^{\pi_0}_r}^2_A-\norm{\hat Q-Q^{\pi_0}_r}^2_A\right]&=
\Tr(\AK_{\pi_0} \drho^{-1})
-\Tr\left(\AK_{\pi_0} (\Id-\Pi)\drho^{-1}\right)
\\&=\Tr(\AK_{\pi_0}\Pi\drho^{-1})
\\&=\Tr(\AK_{\pi_0}\phi\transp{\phi})
\\&=\Tr(\transp{\phi}\AK_{\pi_0}\phi).
\end{align}
This yields the expression of the expected gain in Theorem~\ref{thm:featurereturn} for the case of
random Gaussian or goal-reaching rewards.

For random scattered rewards, the computation has one more term. Let us
start with \eqref{eq:return_Err}. By Proposition~\ref{prop:Err}, 
\begin{equation}
\E [r\transp{r}]=
\kappa (\mu^2+\sigma^2)\drho^{-1}+
(\kappa\mu)^2 \1\transp{\1}.
\end{equation}
The first term is proportional to $\drho^{-1}$, so its contribution to
\eqref{eq:return_Err} is
the same as for random Gaussian rewards, up to the additional factor
$\kappa (\mu^2+\sigma^2)$. The contribution of the second term
$(\kappa\mu)^2 \1\transp{\1}$ to \eqref{eq:return_Err} is $(\kappa\mu)^2$
times
\begin{multline}
\Tr(\AK_{\pi_0}\1\transp{\1})-\Tr(\AK_{\pi_0}(\Id-\Pi)\1\transp{\1}\transp{(\Id-\Pi}))
\\=\transp{\1}\AK_{\pi_0}\1-\transp{((\Id-\Pi)\1)}\AK_{\pi_0}(\Id-\Pi)\1
\\=2\transp{(\Pi\1)}\AK_{\pi_0}\1-\transp{(\Pi\1)}\AK_{\pi_0}\Pi\1
\end{multline}

Now, by Lemma~\ref{lem:A1}, one has $\AK_{\pi_0}\1=0$. Therefore,
the above reduces to $-\transp{(\Pi\1)}\AK_{\pi_0}\Pi\1$, which is the
term $\transp{\phi_\mathrm{cst}}\,\AK_{\pi_0}\phi_\mathrm{cst}$ in
Theorem~\ref{thm:featurereturn}, by
definition of $\phi_\mathrm{cst}$.
This ends the proof.
\end{dem}

\begin{dem}[ of Corollary~\ref{cor:optfeatures}]
The Poincaré separation theorem states
that, given a symmetric matrix $A$ and a rank-$d$ orthogonal projector
$\Pi$, the $i$-th largest eigenvalue of $\Pi A\Pi$ is at most the $i$-th
largest eigenvalue of $A$. Consequently, the trace of $\Pi A \Pi$ is at
most the sum of the largest $d$ eigenvalues of $A$, which is achieved
when $\Pi$ coincides with the largest $d$ eigendirections of $A$.

For our case, let us perform a change of basis $\phi=\drho^{-1/2}\tilde
\phi$: the $L^2(\rho)$-orthonormality of $\phi$ is equivalent to Euclidean
orthonormality of $\tilde \phi$. Then $\tilde \Pi\deq
\tilde \phi\transp{\tilde \phi}$ is the Euclidean orthogonal projector
onto $\tilde \phi$. We can then rewrite
\begin{align}
\Tr(\transp{\phi}\AK_{\pi_0} \phi)
&=\Tr(\transp{\tilde \phi} \drho^{-1/2}\AK_{\pi_0} \drho^{-1/2} \tilde \phi)
\\&=\Tr(\drho^{-1/2}\AK_{\pi_0} \drho^{-1/2} \tilde \phi\transp{\tilde
\phi})
\\&=\Tr(\drho^{-1/2}\AK_{\pi_0} \drho^{-1/2} \tilde \Pi)
\\&=\Tr(\drho^{-1/2}\AK_{\pi_0} \drho^{-1/2} \tilde \Pi^2)
\\&=\Tr(\tilde \Pi \drho^{-1/2}\AK_{\pi_0} \drho^{-1/2}\tilde \Pi).
\end{align}

Therefore, by the Poincaré separation theorem, this is
maximal when $\tilde \phi$ are the largest $d$ eigenvectors of
$\drho^{-1/2}\AK_{\pi_0}\drho^{-1/2}$, or equivalently, when $\phi$ are
the largest $d$ eigenvectors of $\drho^{-1}\AK_{\pi_0}$. 

Note that $\drho^{-1}\AK_{\pi_0}$ is self-adjoing in $L^2(\rho)$, because
$\AK_{\pi_0}$ is symmetric. Therefore, 
the largest $d$ eigenvectors of $\drho^{-1}\AK_{\pi_0}$
are also the extrema of $\langle
r,\drho^{-1}\AK_{\pi_0}r\rangle_{L^2(\rho)}/\norm{r}^2_{L^2(\rho)}$, and since
$\langle r,\drho^{-1}\AK_{\pi_0}r\rangle_{L^2(\rho)}
=
\transp{r}\AK_{\pi_0}r
$, they are also the extrema
of $\transp{r}\AK_{\pi_0}r/\norm{r}^2_{L^2(\rho)}$.
This proves the claim for the case
of random Gaussian and random goal-reaching rewards.

For the case of random scattered rewards, on top of the
$\Tr(\transp{\phi}\AK_{\pi_0} \phi)$ term, there is an additional term
$-\transp{(\Pi\1)}\AK_{\pi_0}\Pi\1\leq 0$. So, a priori, the maximum is
lower due to this
term, and might be obtained with a different choice of $\phi$. However,
since $\AK_{\pi_0}\1=0$, the main eigendirections above
do \emph{not} include the constants, and are $L^2(\rho)$-orthogonal to
the constants. Therefore, if we set
$\phi$ to those eigendirections, then $\Pi\1=0$ so the additional term is
$0$. Namely, the choice of $\phi$ that maximizes the
$\Tr(\transp{\phi}\AK_{\pi_0}\phi)$ term \emph{also} sets the extra negative term to
$0$, so the maximum is still given by those eigendirections.

This ends the proof of Theorem~\ref{thm:featurereturn}.
\end{dem}

We now turn to the proof of Theorem~\ref{thm:lap}. The first proposition
relates three norms in an MDP: the advantage norm, the Dirichlet form
$\langle f,\Delta f\rangle$, and the $L^2(\rho)$ norm.

\begin{prop}
\label{prop:Anorm}
In a deterministic environment, for any function $f$ on state-actions,
and any decay factor $0\leq \gamma\leq 1$,
\begin{align}
\norm{f}^2_A&=\norm{f}^2_{L^2(\rho)}-\norm{P_{\pi_0} f}^2_{L^2(\rho)}
\end{align}
and for $0<\gamma\leq 1$ this is further equal to
\begin{align}
\norm{f}^2_A
&=\frac1{\gamma^2}\left(
2\langle f,\Delta f\rangle_{L^2(\rho)} -\norm{\Delta f}^2_{L^2(\rho)}-(1-\gamma^2)
\norm{f}^2_{L^2(\rho)}
\right)
\end{align}
where $\Delta=\Id-\gamma P_{\pi_0}$ is the Laplacian operator of $\pi_0$.
\end{prop}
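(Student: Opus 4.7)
The plan is to prove the first identity directly from the definitions (using determinism in exactly one place) and then derive the second by purely algebraic manipulation from $\Delta = \Id - \gamma P_{\pi_0}$.

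First I would unpack the advantage norm. Because $\rho$ is the invariant distribution of $P_{\pi_0}$, it factors as $\rho(s,a) = \rho_S(s)\pi_0(a\st s)$, so writing $\bar f(s) \deq \E_{a'\sim\pi_0(s)}[f(s,a')]$ one gets
\[
\E_{(s,a)\sim\rho}[f(s,a)\,\bar f(s)] = \E_{s\sim\rho_S}[\bar f(s)^2] = \E_{(s,a)\sim\rho}[\bar f(s)^2].
\]
Expanding $(f(s,a)-\bar f(s))^2$ and using this identity collapses two cross terms, leaving
\[
\norm{f}^2_A = \norm{f}^2_{L^2(\rho)} - \E_{s\sim\rho_S}[\bar f(s)^2].
\]

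Next I would compute $\norm{P_{\pi_0}f}^2_{L^2(\rho)}$ and see that the determinism assumption makes it equal to $\E_{s\sim\rho_S}[\bar f(s)^2]$. In a deterministic environment the next state $s'$ is a deterministic function of $(s,a)$, so $(P_{\pi_0}f)(s,a) = \bar f(s')$ with no outer expectation over $s'$. Squaring and integrating under $\rho$, and using that the pushforward of $\rho$ under $(s,a)\mapsto s'$ is $\rho_S$ (again by $P_{\pi_0}$-invariance of $\rho$), yields $\norm{P_{\pi_0}f}^2_{L^2(\rho)} = \E_{s'\sim\rho_S}[\bar f(s')^2]$. Combined with the previous display this gives the first identity. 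This is the only step that uses determinism: in general one only has $(P_{\pi_0}f)(s,a) = \E_{s'}[\bar f(s')]$, and $\E[\bar f(s')^2]\neq (\E[\bar f(s')])^2$ under nontrivial randomness of $s'$, which is the main conceptual obstacle to removing the assumption.

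For the second identity I would simply eliminate $P_{\pi_0}$ in favor of $\Delta$. Expanding
\[
\norm{\Delta f}^2_{L^2(\rho)} = \norm{f}^2_{L^2(\rho)} - 2\gamma\langle f, P_{\pi_0} f\rangle_{L^2(\rho)} + \gamma^2\norm{P_{\pi_0} f}^2_{L^2(\rho)}
\]
and using $\gamma\langle f, P_{\pi_0} f\rangle_{L^2(\rho)} = \norm{f}^2_{L^2(\rho)} - \langle f, \Delta f\rangle_{L^2(\rho)}$, one solves for $\gamma^2\norm{P_{\pi_0}f}^2_{L^2(\rho)} = \norm{\Delta f}^2_{L^2(\rho)} + \norm{f}^2_{L^2(\rho)} - 2\langle f, \Delta f\rangle_{L^2(\rho)}$. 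Substituting into the first identity and rearranging gives
\[
\norm{f}^2_A = \frac{1}{\gamma^2}\left(2\langle f,\Delta f\rangle_{L^2(\rho)} - \norm{\Delta f}^2_{L^2(\rho)} - (1-\gamma^2)\norm{f}^2_{L^2(\rho)}\right),
\]
which is the desired formula. No further ingredient is needed beyond the Bellman-type algebra already present in the paper.
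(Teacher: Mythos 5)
Your proposal is correct and follows essentially the same route as the paper: a variance decomposition of the advantage norm, the invariance of $\rho$ to identify $\E_{s\sim\rho_S}[\bar f(s)^2]$ with $\norm{P_{\pi_0}f}^2_{L^2(\rho)}$ under determinism (the paper phrases this as $Pg^{\cdot 2}=(Pg)^{\cdot 2}$, you phrase it via the deterministic next-state map and the pushforward of $\rho$ being $\rho_S$ — the same fact), and then the same substitution $P_{\pi_0}=\tfrac1\gamma(\Id-\Delta)$ for the second identity. No gaps.
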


\begin{dem}
Let us denote by $\bar \rho$ the row vector of size $S$ with components
given by $\bar \rho_s\deq\rho(s)$, the marginal probability of state $s$
under $\rho$. Let $f^{\cdot 2}$ denote the pointwise square applied to a vector
$f$. Using this notation, we have
\begin{align}
\norm{f}^2_A&=
\E_{s\sim \rho,\,a\sim \pi_0(s)} (f(s,a)-\E_{a'\sim \pi_0(s)} f(s,a'))^2
\\&=
\E_{s\sim \rho,\,a\sim \pi_0(s)} f(s,a)^2 - \E_{s\sim \rho} (\E_{a\sim
\pi_0(s)} f(s,a))^2
\\&= \bar\rho \pi_0 f^{\cdot 2}-\bar\rho(\pi_0 f)^{\cdot 2}
\end{align}
where we view $\pi_0$ as an $S\times (S\times A)$ matrix, as explained in
the Notation.

Since $\rho$ is the invariant distribution of $\pi_0$, we have $\bar
\rho=\bar \rho \pi_0 P$. Therefore,
\begin{equation}
\norm{f}^2_A=\bar\rho \pi_0 f^{\cdot 2}-\bar\rho \pi_0 P (\pi_0 f)^{\cdot
2}.
\end{equation}

An environment is deterministic if and only if the variance of
$g(s_{t+1})$ knowing $(s_t,a_t)$ is $0$ for any function $g$. In other
words, the environment is deterministic if and only if $Pg^{\cdot
2}-(Pg)^{\cdot 2}=0$ for any $g$. Therefore, in a deterministic
environment,
\begin{equation}
\bar\rho \pi_0 P (\pi_0 f)^{\cdot 2}=\bar\rho \pi_0 (P\pi_0 f)^{\cdot 2}
=\bar\rho \pi_0 (P_{\pi_0} f)^{\cdot 2}.
\end{equation}

Finally, for any function $g$, $\bar\rho\pi_0 g^{\cdot 2}$ is just
another notation for $\norm{g}^2_{L^2(\rho)}$. Therefore, we find
\begin{equation}
\norm{f}^2_A=\bar\rho \pi_0 f^{\cdot 2}-\bar\rho\pi_0 (P_{\pi_0}f)^{\cdot
2}=\norm{f}^2_{L^2(\rho)}-\norm{P_{\pi_0}f}^2_{L^2(\rho)}
\end{equation}
as needed.
The second statement follows from substituting
$P_{\pi_0}=\frac{1}{\gamma}(\Id-\Delta)$ and expanding.
\end{dem}

\begin{dem}[ of Theorem~\ref{thm:lap}]
Let $r\in L^2_0(\rho)$. Since $r$ averages to $0$, its $Q$-function
$Q^{\pi_0}_r$ is well-defined for $\gamma=1$. By definition of the
advantage kernel, we have
\begin{equation}
\transp{r}
\AK_{\pi_0}r=\norm{A^{\pi_0}_r}^2_{L^2(\rho)}=\norm{Q^{\pi_0}_r}^2_A
\end{equation}
by definition of $\norm{\cdot}_A$.

By Proposition~\ref{prop:Anorm} for $\gamma=1$, we have
\begin{equation}
\norm{Q^{\pi_0}_r}^2_A=2\langle Q^{\pi_0}_r,\Delta Q^{\pi_0}_r
\rangle_{L^2(\rho)}-\norm{\Delta Q^{\pi_0}_r}^2_{L^2(\rho)}.
\end{equation}
Now, $Q^{\pi_0}_r$ satisfies the Bellman equation
$Q^{\pi_0}_r=r+P_{\pi_0} Q^{\pi_0}_r$ for $\gamma=1$. By definition of
$\Delta$, this rewrites as
\begin{equation}
\Delta Q^{\pi_0}_r=r.
\end{equation}
Therefore,
\begin{align*}
\norm{Q^{\pi_0}_r}^2_A
&=2 \langle Q^{\pi_0}_r,r\rangle_{L^2(\rho)}-\norm{r}^2_{L^2(\rho)}
\\&=2\langle \Delta^{-1} r,r\rangle_{L^2(\rho)}-\norm{r}^2_{L^2(\rho)}
\\&=\langle \Delta^{-1} r,r\rangle_{L^2(\rho)}+\langle
r,(\Delta^{-1})^{\ast} r\rangle_{L^2(\rho)}-\norm{r}^2_{L^2(\rho)} 
\\&=\langle r, (\Delta^{-1}+(\Delta^{-1})^{\ast}-\Id)r\rangle_{L^2(\rho)}
\end{align*}
as needed.

Therefore, the functions $r\in L^2_0(\rho)$ that are the extrema of
$\transp{r}\AK_{\pi_0}r/\norm{r}^2_{L^2(\rho)}$ are the largest
eigenfunctions of $\Delta^{-1}+(\Delta^{-1})^{\ast}-\Id$, or equivalently
the
largest eigenfunctions of $\Delta^{-1}+(\Delta^{-1})^{\ast}$.
\end{dem}

\begin{dem}[ of Proposition~\ref{prop:gamma0}]
Let $r\in L^2(\rho)$. By definition of the
advantage kernel, we have
\begin{equation}
\transp{r}
\AK_{\pi_0}r=\norm{A^{\pi_0}_r}^2_{L^2(\rho)}=\norm{Q^{\pi_0}_r}^2_A
\end{equation}
by definition of $\norm{\cdot}_A$.

But for $\gamma=0$ we have $Q^{\pi_0}_r=r$, so
\begin{equation}
\transp{r}
\AK_{\pi_0}r=\norm{r}^2_A.
\end{equation}

By Proposition~\ref{prop:Anorm} for $\gamma=0$, we have
\begin{align}
\norm{r}^2_A&=\norm{r}^2_{L^2(\rho)}-\norm{P_{\pi_0} r}^2_{L^2(\rho)}
\\&=\langle r,r\rangle_{L^2(\rho)}
-\langle P_{\pi_0} r,P_{\pi_0} r\rangle_{L^2(\rho)}
\\&=\langle r,(\Id-P_{\pi_0}^\ast P_{\pi_0}) r\rangle_{L^2(\rho)}
\end{align}
as needed.

Therefore, the extrema of $\transp{r}\AK_{\pi_0}r/\norm{r}^2_{L^2(\rho)}$
are the largest eigenfunctions of $\Id-P_{\pi_0}^\ast P_{\pi_0}$, namely,
the smallest eigenfunctions of $P_{\pi_0}^\ast P_{\pi_0}$.
\end{dem}

\begin{dem}[ of Theorem~\ref{thm:gen}]
Let $r\in L^2(\rho)$. By definition of the
advantage kernel, we have
\begin{equation}
\transp{r}
\AK_{\pi_0}r=\norm{A^{\pi_0}_r}^2_{L^2(\rho)}=\norm{Q^{\pi_0}_r}^2_A
\end{equation}
by definition of $\norm{\cdot}_A$.

By Proposition~\ref{prop:Anorm} for $0<\gamma<1$, we have
\begin{equation}
\gamma^2 \norm{Q^{\pi_0}_r}^2_A=2\langle Q^{\pi_0}_r,\Delta Q^{\pi_0}_r
\rangle_{L^2(\rho)}-\norm{\Delta Q^{\pi_0}_r}^2_{L^2(\rho)}-(1-\gamma^2)
\norm{Q^{\pi_0}_r}^2_{L^2(\rho)}.
\end{equation}
Now, $Q^{\pi_0}_r$ satisfies the Bellman equation
$Q^{\pi_0}_r=r+\gamma P_{\pi_0} Q^{\pi_0}_r$. By definition of
$\Delta$, this rewrites as
\begin{equation}
\Delta Q^{\pi_0}_r=r, \qquad Q^{\pi_0}_r=\Delta^{-1}r.
\end{equation}
Therefore,
\begin{align*}
\gamma^2 \norm{Q^{\pi_0}_r}^2_A
&=2 \langle
Q^{\pi_0}_r,r\rangle_{L^2(\rho)}-\norm{r}^2_{L^2(\rho)}-(1-\gamma^2)\norm{Q^{\pi_0}_r}^2_{L^2(\rho)}
\\&=2\langle \Delta^{-1}
r,r\rangle_{L^2(\rho)}-\norm{r}^2_{L^2(\rho)}-(1-\gamma^2)\norm{\Delta^{-1}r}^2_{L^2(\rho)}
\\&=\langle r, \left(\Delta^{-1}+(\Delta^{-1})^{\ast}-\Id-(1-\gamma^2)
(\Delta^{-1})^\ast \Delta^{-1}\right)r\rangle_{L^2(\rho)}
\end{align*}
as needed.

Therefore, the extrema of
$\transp{r}\AK_{\pi_0}r/\norm{r}^2_{L^2(\rho)}$ are the largest
eigenfunctions of $\Delta^{-1}+(\Delta^{-1})^{\ast}-\Id-(1-\gamma^2)
(\Delta^{-1})^\ast \Delta^{-1}$, or equivalently
the
largest eigenfunctions of $\Delta^{-1}+(\Delta^{-1})^{\ast}-(1-\gamma^2)
(\Delta^{-1})^\ast \Delta^{-1}$.

Note that $\Delta^{-1}+(\Delta^{-1})^{\ast}-\Id-(1-\gamma^2)
(\Delta^{-1})^\ast \Delta^{-1}$ is a non-negative operator, since
$\norm{Q^{\pi_0}_r}^2_A\geq 0$.

\bigskip

Alternatively, one can start with the first expression in
Proposition~\ref{prop:Anorm}, namely $\norm{Q^{\pi_0}_r}^2_A=\norm{Q^{\pi_0}_r}^2_{L^2(\rho)}-
\norm{P_{\pi_0} Q^{\pi_0}_r}^2_{L^2(\rho)}$. By a similar derivation,
this leads to a slightly different expression for the same quantities,
mixing $\Delta$ and $P_{\pi_0}$:
\begin{equation}
\transp{r}\AK_{\pi_0}r=
\langle r, (\Delta^{-1})^\ast (\Id-P_{\pi_0}^\ast
P_{\pi_0})\Delta^{-1} r\rangle_{L^2(\rho)}.
\end{equation}
\end{dem}

\begin{dem}[ of Proposition~\ref{prop:gap}]
By definition, successor features estimate $\hat Q$ as
the $Q$-function of the estimated reward $\hat r$,
\begin{equation}
\hat Q=\hat r+ \gamma P_{\pi_0} \hat Q
\end{equation}
where
\begin{equation}
\hat r(s,a)=\transp{z}\phi(s,a)
\end{equation}
is the $L^2(\rho)$-orthogonal projection of $r$ onto the features $\phi$,
with $z$ given by \eqref{eq:zrsf}. Therefore
\begin{equation}
\norm{\hat Q-r-\gamma P_{\pi_0} \hat Q}^2_{L^2(\rho)}=\norm{\hat
r-r}^2_{L^2(\rho)}=\norm{(\Id-\Pi)r}^2_{L^2(\rho)}
\end{equation}
where $\Pi$ is the $L^2(\rho)$-orthogonal projector onto the features.

Therefore, we have
\begin{align}
\E_r \norm{\hat Q-r-\gamma P_{\pi_0} \hat Q}^2_{L^2(\rho)}
&=
\E_r \norm{(\Id-\Pi)r}^2_{L^2(\rho)}
\\&=\E_r \left[
\langle (\Id-\Pi)r, (\Id-\Pi)r \rangle_{L^2(\rho)}\right]
\\&=\E_r \left[\langle r, (\Id-\Pi)r \rangle_{L^2(\rho)}\right]
\\&=\E_r \left[\transp{r} \drho (\Id-\Pi)r\right]
\\&=\E_r \Tr\left(
r\transp{r}\drho (\Id-\Pi)
\right)
\\&= \Tr\left( \E_r[r\transp{r}] \drho (\Id-\Pi)\right).
\end{align}

By Proposition~\ref{prop:Err}, we have $\E_r[r\transp{r}]=\drho^{-1}$ in
the random Gaussian reward and random goal-reaching reward models.
Therefore the above is
\begin{align}
\Tr\left( \E_r[r\transp{r}] \drho (\Id-\Pi)\right)
&= \Tr\left( \Id-\Pi\right)
\\&=\#S\times \#A - d
\end{align}
since $\Pi$ is a projector of rank $d$. This proves the result.

The same proof works for the universal successor features as in
\cite{borsa2018universal}, where $\hat Q$
satisfies a Bellman equation with respect to $\pi_z$ instead of $\pi_0$.
\end{dem}

\newpage

\bibliographystyle{alpha}
\bibliography{whylowfreq}

\end{document}